\let\svthefootnote\thefootnote
\newcommand\freefootnote[1]{%
  \let\thefootnote\relax%
  \footnotetext{\hspace{-1.5em}#1}%
  \let\thefootnote\svthefootnote%
}
\title{Geometry of Lightning Self-Attention: \\ Identifiability and Dimension}
\author{Nathan W. Henry *\\
University of Toronto\\
\texttt{nathan.henry@mail.utoronto.ca} \\
\And
Giovanni Luca Marchetti * \\
Royal Institute of Technology (KTH)\\
\texttt{glma@kth.se} \\
\And
Kathl\'en Kohn * \\
Royal Institute of Technology (KTH)\\
\texttt{kathlen@kth.se} 
}
\theoremstyle{plain}
\newtheorem{thm}{Theorem}[section]
\newtheorem{prop}[thm]{Proposition}
\newtheorem{lemm}[thm]{Lemma}
\newtheorem{conjec}[thm]{Conjecture}
\newtheorem{cor}[thm]{Corollary}
\theoremstyle{definition}
\newtheorem{defn}{Definition}
\theoremstyle{remark}
\newcommand{\symmm}[3]{\textnormal{Sym}_{#1}\left(#2, #3\right)}
\newcommand{\transp}{\top}
\begin{document}

\maketitle

\begin{abstract}
We consider function spaces defined by self-attention networks without normalization, and theoretically analyze their geometry. Since these networks are polynomial, we rely on tools from algebraic geometry. In particular, we study the identifiability of deep attention by providing a description of the generic fibers of the parametrization for an arbitrary number of layers and, as a consequence, compute the dimension of the function space. Additionally, for a single-layer model, we characterize the singular and boundary points. Finally, we formulate a conjectural extension of our results to normalized self-attention networks, prove it for a single layer, and numerically verify it in the deep case.
\end{abstract}

\begin{figure}[h]
    \centering
    \includegraphics[width=0.63\linewidth]{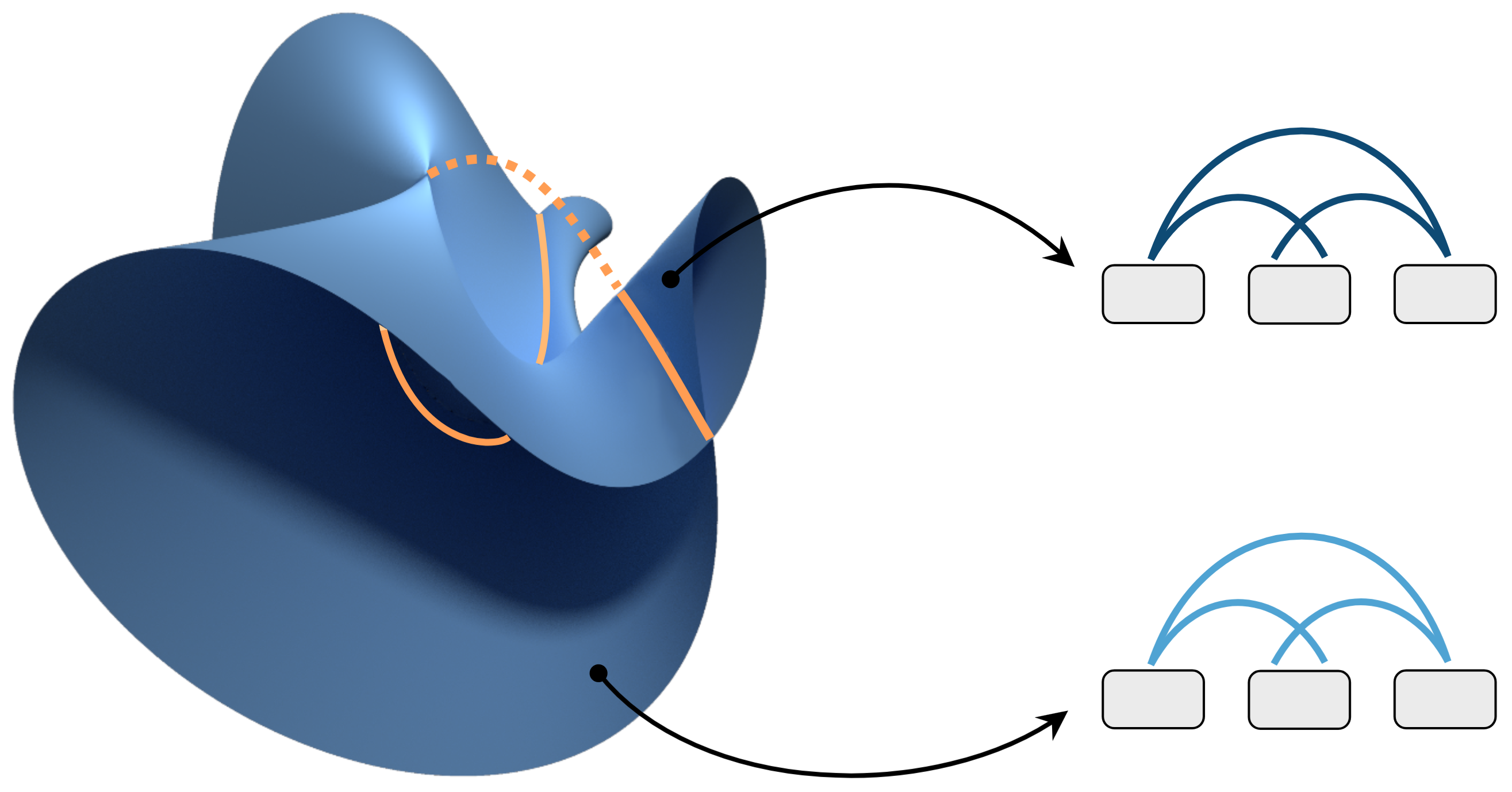}
    \caption{A slice of the space of lightning self-attention mechanisms.}
    \label{fig:segre}
\end{figure}

\section{Introduction and Related Work}\label{sec:introrel}

\freefootnote{*Equal contribution.}The \emph{self-attention mechanism} is the cornerstone of the Transformer -- a modern machine learning architecture that is nowadays popular in a vast variety of domains, ranging from natural language processing \citep{vaswani2017attention}, to vision \citep{dosovitskiy2020image}, to sound \citep{Huang2018MusicTG}. In all of these domains, self-attention mechanisms have showcased outstanding performance due to their ability to model long-range dependencies within data sequences. \emph{Lightning} self-attention mechanisms \citep{schlag2021linear} are standard variants where, differently from the original proposal, the attention weights are left un-normalized. As a result, the computational complexity of a forward pass is linear with respect to the sequence length, substantially improving on the quadratic complexity of the original model.

Despite their effectiveness, the theoretical understanding of self-attention mechanisms is superficial, and many aspects have yet to be clarified. In particular, understanding the geometry of function spaces defined by neural networks -- typically referred to as \emph{neuromanifolds} \citep{marchetti2025invitation, kohn2024geometry, calin2020neuromanifolds} -- is a fundamental challenge due to its intimate connection to several machine learning aspects, such as sample complexity and expressivity. Moreover, since neural networks learn by following a gradient flow over the neuromanifold, the geometry of the latter controls several aspects of the training dynamics \citep{trager2019pure}. While neuromanifolds are well-understood for several architectures, such as fully-connected \citep{kileel2019expressive, kubjas2024geometry} and convolutional networks \citep{kohn2022geometry, kohn2023function, shahverdi2024geometryoptimizationpolynomialconvolutional}, they have not been considered for self-attention mechanisms. 

In this work, we study the geometry of neuromanifolds associated to lightning self-attention mechanisms. These models are of algebraic nature, since they are tri-linear in their weights and cubical in the input. This enables us to analyze neuromanifolds via ideas and tools from \emph{algebraic geometry} -- a rich field concerned with spaces defined by polynomial equations. In particular, it is possible to compute geometric quantities such as the \emph{dimension} of the neuromanifold. The latter is a measure of expressivity of the underlying model. More concretely, it is intimately linked with sample complexity. According to the Fundamental Theorem of Learning, the dimension controls, linearly, the sample complexity of learnability \citep{shalev2014understanding}. This theory is typically formulated for (binary) classifiers\footnote{In this context, neuromanifolds are referred to as `hypothesis spaces', and are usually considered in a combinatorial version.}, and the notion of dimension is a discrete one – the Vapnik–Chervonenkis (VC) dimension, specifically. In the continuous setting, the dimension of the neuromanifold is the natural analogue of the combinatorial VC dimension, and controls the sample complexity of learnability. An expression for sample complexity can be used both to select the appropriate model/architecture given an available dataset, and to collect appropriate amounts of data to train a given a model. This is especially important for attention-based models, that are nowadays popular in several domains, and are trained at extremely-large scales. 


The dimension of the neuromanifold is related to the dual question of \emph{identifiability} \citep{grigsby2023hidden, bona2023parameter, fefferman1994reconstructing} -- a problem concerned with characterizing the parameters corresponding to the same function. Geometrically, such parameters define \emph{fibers} of the parametrization of the neuromanifold, and their (generic) dimension measures the difference between the dimension of the neuromanifold and the number of parameters. Therefore, characterizing fibers leads to an estimate of sample complexity which is more precise than the common practice of counting parameters. Moreover, understanding the fibers can be interesting beyond their relation to the dimension, since they control aspects of the training dynamics. Indeed, fibers induce invariances of the loss function which are data-independent, meaning that for any dataset, the loss will be constant for parameters within the same fiber. This gives rise to the phenomenon of flatness of the loss landscape \citep{zhao2022symmetries}, where minima are not isolated but instead belong to a continuous set. Even further, it is understood that the symmetries of the loss landscape control training dynamics as gradient directions must be orthogonal to fibers of the loss, which also induces a constraint on the Hessian \citep{kunin2020neural}. This has recently been exploited to design optimizers that `teleport’ along fibers \citep{zhao2022symmetry}, improving learning efficiency. 

\subsection{Summary of Contributions}
Our core contribution is a description of the (generic) fibers of the parametrization of lightning self-attention networks. As a consequence, the expression for the dimension of the neuromanifold follows immediately. More specifically, our results are summarized as follows.

For a single layer of lightning self-attention (Section \ref{sec:singlay}), we describe all the fibers, and additionally study various aspects of the geometry of the neuromanifold. Specifically, we prove that it is Euclidean closed and compute its singular and boundary points. 

For a deep lightning self-attention network (Section \ref{sec:deep}), we compute the generic fibers. Our proof involves a reparametrization -- which can be interpreted as introducing `virtual weights' -- and a subtle induction argument based on a closed-form algebraic expression for the network w.r.t. the new parameters. Assuming the network has a bottleneck architecture, we derive a formula for the dimension of the neuromanifold. In particular, the dimension is strictly lower than the number of parameters, with redundancies arising from a scaling symmetry, from inter-layer symmetries, and from the rank constraint on the attention weights.

Lastly, we study traditional self-attention by re-introducing the softmax normalization (Section \ref{sec:tradattn}). We prove that the parametrization of a single layer is generically one-to-one, and state a conjecture -- verified via numerical experiments -- for the generic fibers of deep self-attention networks.

\section{Lightning Self-Attention}
Fix positive integers $d, d', a, t \in \mathbb{N}$ and matrices $Q, K \in \mathbb{R}^{a \times d}$, $V \in \mathbb{R}^{d' \times d}$. The latter are deemed \emph{query weights}, \emph{key weights}, and \emph{value weights} respectively\footnote{An alternative standard notation for $Q,K,V$ is $W_Q, W_K, W_V$. Our choice is motivated by better readability.}. A self-attention mechanism is a map parametrized by $(Q, K, V)$ sending sequences of length $t$ of vectors in $\mathbb{R}^d$ to sequences of vectors in $\mathbb{R}^{d'}$. Here we consider the variant of self-attention mechanisms deemed \emph{lightning}, which is computationally efficient and fully algebraic. 
\begin{defn}
The \emph{lightning self-attention mechanism} associated to the weights $W = (Q, K, V)$ is the map: 
\begin{equation}\label{eq:attnformula}
\begin{array}{ccl}
 \varphi_{W}\colon \mathbb{R}^{d \times t}  & \rightarrow & \mathbb{R}^{d' \times t} \\
  \left( x_i \right)_{1 \leq i \leq t} & \rightarrow & \displaystyle \left( \sum_{1 \leq j \leq t} x_j^\transp\left( K^\transp Q \right)x_i \ Vx_j \right)_{1 \leq i \leq t} 
\end{array}
\end{equation}
\end{defn}
Intuitively, every component $x_i$ of the input corresponds to a \emph{token} and `attends' bilinearly to every other $x_j$, producing a scalar weight $x_j^\transp K^\transp Qx_i \in \mathbb{R}$. These weights are used to aggregate the values $V x_j \in \mathbb{R}^{d'}$, obtaining the corresponding component of the output. The map defined by Equation \ref{eq:attnformula} is tri-linear in $(Q, K, V)$ and homogeneous cubical in $x$. It is often convenient to write Equation \ref{eq:attnformula} in matrix form: if $X = (x_i)_{1 \leq i \leq t}$ is interpreted as a $d \times t$ matrix, then: 
\begin{equation}
    \varphi_W(X) = VXX^\transp K^\transp QX. 
\end{equation}
Moreover, we will often simplify the parametrization by introducing the \emph{attention matrix}:
\begin{equation}
    A = K^\transp Q. 
\end{equation}
The latter will always be interpreted as a bilinear form $x^\transp A y$. 

Lightning attention mechanisms are variants of the traditional ones \citep{vaswani2017attention}, where the attention weights $x_j^\transp A x_i$ are normalized to a probability distribution across $j$ -- see Section \ref{sec:tradattn} for further details. The major practical advantage of the lightning variant is its computational efficiency with respect to the sequence length. Specifically, Equation \ref{eq:attnformula} can be computed in $\mathcal{O}(t)$ time, while traditional self-attention mechanisms require $\mathcal{O}(t^2)$ time due to normalization. The improvement in efficiency motivates the term `lightning'. 

Self-attention mechanisms can be stacked in order to obtain a deep network architecture. To this end, fix positive integers $t, l, \mathbf{d}=(d_0, \ldots, d_l), \mathbf{a} = (a_1, \ldots, a_{l})$, and weights $\mathbf{Q} = (Q_1, \ldots, Q_{l}), \mathbf{K} = (K_1, \ldots, K_{l}), \mathbf{V} = (V_1, \ldots, V_{l})$, with $ Q_i, K_i \in \mathbb{R}^{a_i \times d_i},  V_i\in \mathbb{R}^{d_{i} \times d_{i-1}}$. 
\begin{defn}\label{def:deepattn}
A \emph{deep self-attention network} associated to the weights $\mathbf{W} = (\mathbf{Q}, \mathbf{K}, \mathbf{V})$ is the map:
\begin{equation}\label{eq:deepattnformula}
\varphi_{\mathbf{W}} \colon \mathbb{R}^{d_0 \times t} \rightarrow \mathbb{R}^{d_l \times t}
\end{equation}
given by the composition $\varphi_{\mathbf{W}} = \varphi_{W_{l}} \circ \cdots \circ  \varphi_{W_{1}} $. 
\end{defn}
Again, a deep self-attention network is homogeneous of degree $3^l$ in $x$. Based on this, we denote by $\symmm{3^l}{\mathbb{R}^{d_0 \times t}}{\mathbb{R}^{d_l \times t}}$ the vector space of homogeneous polynomial functions from $\mathbb{R}^{d_0 \times t}$ to $\mathbb{R}^{d_l \times t}$ of degree $3^l$ in all the output co-ordinates. 


\begin{defn}
The \emph{neuromanifold} of a deep self-attention network is the image of the parametrization map $\mathbf{W} \mapsto \varphi_\mathbf{W}$:
\begin{equation}
\mathcal{M}_{\mathbf{d}, \mathbf{a}} = \left\{  \varphi_\mathbf{W} \ | \ \mathbf{W} \in \mathbb{R}^{\sum_{i} d_i(2a_i + d_{i-1})} \right\} \ \subset \symmm{3^l}{\mathbb{R}^{d_0 \times t}}{\mathbb{R}^{d_l \times t}}.
\end{equation}
\end{defn}
The neuromanifold is a semi-algebraic set by the Tarski-Seidenberg Theorem, meaning that it can be defined by a finite number of polynomial equalities and inequalities in $\symmm{3^l}{\mathbb{R}^{d_0 \times t}}{\mathbb{R}^{d_l \times t}}$.

\section{Results}
In this section, we study the neuromanifold of lightning attention networks, focusing on its parametrization and its dimension. Our core focus will be the description of the \emph{fibers} of the parametrization map $\mathbf{W} \mapsto \varphi_\mathbf{W}$, meaning that we will describe the sets of weights that define the same function. More precisely, the fiber of $\varphi_{\mathbf{W}} \in \mathcal{M}_{\mathbf{d}, \mathbf{a}}$ is the set
\begin{equation}
\{\mathbf{W}'  \ | \ \varphi_{\mathbf{W}'} = \varphi_{\mathbf{W}}\}. 
\end{equation}
Once the fibers are understood, the dimension of the neuromanifold can be computed. To this end, it is actually sufficient to describe the \emph{generic} fibers, i.e., the ones corresponding to `almost all $\mathbf{W}$' or, more precisely, to $\mathbf{W}$ lying outside of the common zeros of a polynomial system. The co-dimension of such fibers is constant and coincides with the dimension of the neuromanifold.

In order to study the parametrization map and its fibers, it is convenient to split the problem by considering self-attention mechanisms as parametrized via the attention matrix. More precisely, we will think of self-attention mechanisms as parametrized, by abuse of notation, via weights $W = (A, V)$, where $A \in \mathbb{R}^{d \times d}$ is an arbitrary matrix, and will study the matrix multiplication map $(Q, K) \mapsto A = K^\transp Q$ independently. We begin by considering the latter. When $a < d$, the matrix multiplication map is not surjective, since $A$ is constrained to have rank $\leq a$. In other words, the image of this map is the \emph{determinantal variety} -- defined as the set of matrices in $\mathbb{R}^{d \times d}$ of rank at most $a$. On the other hand, the fibers of the matrix multiplication map are subtle, since they are closely related to the problem of matrix factorization. Yet, it is still possible to describe the generic fibers. To this end, note that the map exhibits the following invariance: $K^\transp Q = {K'}^\transp Q' $, where $K' = CK$ and $Q' = C^{-\transp}Q$ for an arbitrary invertible matrix $C \in \textnormal{GL}_a(\mathbb{R})$. Conversely, the following elementary result shows that this is the only symmetry of a generic fiber. 
\begin{lemm}\label{lemm:matfact}
Suppose that $A = K^\transp Q = K'^\transp Q' = A'$ and that $\textnormal{rk}(A) = \textnormal{rk}(A') = a \leq d$. Then there exists a unique invertible matrix $C \in \textnormal{GL}_a(\mathbb{R})$ such that $K' = CK$ and $Q' = C^{-\transp}Q$.   
\end{lemm}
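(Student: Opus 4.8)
The plan is to exploit the rank condition to reduce everything to full-rank factorizations and then use the uniqueness of such factorizations up to change of basis on the $a$-dimensional common row/column space. First I would observe that since $A = K^\transp Q$ has rank exactly $a$ and $K, Q \in \mathbb{R}^{a \times d}$, both $K$ and $Q$ must themselves have rank $a$; otherwise the product would have rank strictly less than $a$. The same holds for $K', Q'$. In particular, $K^\transp$ and $Q^\transp$ are injective as linear maps $\mathbb{R}^a \to \mathbb{R}^d$, so they admit left inverses.

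Next I would identify the relevant subspaces intrinsically. The column space of $A$ equals the column space of $K^\transp$ (an $a$-dimensional subspace of $\mathbb{R}^d$), and the row space of $A$ equals the row space of $Q$. Since $A = A'$, we get $\mathrm{colspace}(K^\transp) = \mathrm{colspace}(K'^\transp)$ and $\mathrm{rowspace}(Q) = \mathrm{rowspace}(Q')$. Because both $K^\transp$ and $K'^\transp$ are rank-$a$ maps onto the same $a$-dimensional space, there is a unique $C \in \mathrm{GL}_a(\mathbb{R})$ with $K'^\transp = K^\transp C^\transp$, i.e. $K' = CK$; uniqueness follows since $K^\transp$ is injective. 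It then remains to check that this same $C$ forces $Q' = C^{-\transp} Q$. From $K^\transp Q = K'^\transp Q' = K^\transp C^\transp Q'$ and the injectivity of $K^\transp$, we deduce $Q = C^\transp Q'$, hence $Q' = C^{-\transp} Q$, as desired.

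The only mild subtlety — and the step I would be most careful about — is the claim that a $C$ implementing $K' = CK$ exists and is unique: this is exactly the statement that two surjections from $\mathbb{R}^a$ onto the same subspace with trivial kernel differ by a unique automorphism of $\mathbb{R}^a$, which one proves by picking a left inverse $L$ of $K^\transp$ (so $L K^\transp = I_a$) and setting $C^\transp = L K'^\transp$; one then verifies $K^\transp C^\transp = K'^\transp$ using that both sides have the same column space and agree after applying $L$, and that $C$ is invertible by a symmetric argument with a left inverse of $K'^\transp$. Everything else is linear algebra bookkeeping. I would present the argument in two short paragraphs: first the rank/injectivity reductions, then the construction of $C$ and the verification of both equalities together with uniqueness.
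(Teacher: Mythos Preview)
Your argument is correct and follows essentially the same approach as the paper: both first deduce that $K,Q,K',Q'$ have full rank $a$, then construct the unique $C$ relating $K'$ to $K$, and finally read off $Q' = C^{-\transp}Q$. The only cosmetic difference is that the paper implements the middle step by selecting an invertible $a\times a$ block of columns and working in coordinates, whereas you phrase it coordinate-free via column spaces and left inverses; your formulation sidesteps the small ``without loss of generality'' step needed to ensure the same column indices yield invertible blocks for all four matrices.
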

\begin{proof}
See Appendix \ref{appendix:matfact}
\end{proof}
If follows from the above result that, for $a < d$, the generic fibers of the matrix multiplication map are isomorphic to $\textnormal{GL}_a(\mathbb{R})$, and therefore have dimension $a^2$. This recovers the well-known formula for the dimension of the determinantal variety, which coincides with $2ad - a^2 = a (2d - a)$. 
\subsection{Single-Layer Identifiability}\label{sec:singlay}
We now describe completely the fibers of the parametrization of a lightning self-attention mechanism in terms of the attention matrix. By abuse of notation, we will write $\varphi_W$ for $W = (A, V)$. Firstly, note that it is always possible to rescale the weights without changing the function. That is, $(A, V)$ and $\left(\lambda A, \frac{1}{\lambda}V\right)$ belong to the same fiber for all $\lambda  \in \mathbb{R} \setminus \{ 0 \}$. Therefore, we will focus on the fibers up to rescaling. 

\begin{thm}\label{thm:fiberattn}
Suppose  $t \geq 2$. The fiber of $\varphi_W \in \mathcal{M}_{d,d',a}$ for a given $W=(A, V)$ is as follows: 
 \begin{itemize}
     \item If $\textnormal{rk}(A) = \textnormal{rk}(V) = 1$, given tensor decompositions $A = k \otimes q$ and $V = h \otimes v$ for some $q, k, v\in \mathbb{R}^{d} \setminus \{0 \}, h \in \mathbb{R}^{d'} \setminus \{0 \}$, the fiber consists, up to rescaling, of $W$ and $W' = (v \otimes q, h\otimes k)$. 
     \item If $\varphi_W = 0$, the fiber consists of $W' = (A',V')$ such that $A' = 0$ or $V' = 0$.
     \item Otherwise, the fiber consists only of rescalings of $W$. 
 \end{itemize}
\end{thm}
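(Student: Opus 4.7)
My plan is to extract from $\varphi_W = \varphi_{W'}$ a single bilinear identity that encodes everything, and then to exploit uniqueness of rank-$1$ matrix factorizations. The $i$-th column of $\varphi_W(X)$ decomposes as $(Vx_i)(x_i^\transp A x_i) + \sum_{j \neq i}(Vx_j)(x_j^\transp A x_i)$, whose summands have pairwise distinct multi-degrees in the input columns: the first is homogeneous of degree $3$ in $x_i$, the others are bihomogeneous of bidegree $(1,2)$ in $(x_i, x_j)$ for $j\neq i$. Since $t \geq 2$, fixing indices $j \neq i$ and equating the bihomogeneous components of $\varphi_W(X) = \varphi_{W'}(X)$ yields, after renaming $x := x_j$ and $y := x_i$, the identity $(Vx)(x^\transp A y) = (V'x)(x^\transp A' y)$ for all $x, y \in \mathbb{R}^d$. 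Equivalently, the rank-$\leq 1$ matrix identity
\[
(Vx)(x^\transp A) = (V'x)(x^\transp A') \text{ in } \mathbb{R}^{d' \times d} \text{ for every } x \in \mathbb{R}^d
\]
holds, and conversely this reassembles into $\varphi_W = \varphi_{W'}$, so I work exclusively with it.

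\textbf{Non-rank-one cases.} If $V = 0$ or $A = 0$, the left-hand side vanishes; then $(V'x)(x^\transp A') = 0$ for all $x$, and assuming $V' \neq 0$ I pick $x_0$ with $V' x_0 \neq 0$ to deduce $x^\transp A' = 0$ on a Zariski-dense set, forcing $A' = 0$. This proves the second bullet. Otherwise $V, A \neq 0$, and both sides of the matrix identity are rank-exactly-$1$ on the dense open set $U = \{x : Vx \neq 0,\, x^\transp A \neq 0\}$. By uniqueness of rank-$1$ factorization up to scalar, there exists $\lambda : U \to \mathbb{R} \setminus \{0\}$ with $V'x = \lambda(x) Vx$ and $A'^\transp x = \lambda(x)^{-1} A^\transp x$. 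If $\textnormal{rk}(V) \geq 2$, I pick $x_1, x_2 \in U$ with $Vx_1, Vx_2$ linearly independent (adjusting so that $x_1 + x_2 \in U$ as well) and evaluate the identity at $x_1 + x_2$; linearity of $V'$ together with linear independence of $Vx_1, Vx_2$ forces $\lambda(x_1) = \lambda(x_2)$, so $\lambda$ is a global constant and $(A', V') = (\lambda^{-1} A, \lambda V)$ is a rescaling of $W$. The symmetric argument, run with $A^\transp$ in place of $V$ and $\lambda^{-1}$ in place of $\lambda$, handles $\textnormal{rk}(V) = 1$ with $\textnormal{rk}(A) \geq 2$.

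\textbf{Rank-one/rank-one case.} Writing $V = h v^\transp$ and $A = k q^\transp$, the matrix identity reads $(v^\transp x)(k^\transp x)\, h q^\transp = (V'x)(x^\transp A')$, whose column and row spaces are $\textnormal{span}(h)$ and $\textnormal{span}(q)$ respectively. Hence $V' = h \tilde v^\transp$ and $A' = \tilde k q^\transp$ for some $\tilde v, \tilde k \in \mathbb{R}^d$, reducing the identity to the scalar equation $(v^\transp x)(k^\transp x) = (\tilde v^\transp x)(\tilde k^\transp x)$ for all $x \in \mathbb{R}^d$. By unique factorization of polynomials in $\mathbb{R}[x_1,\ldots,x_d]$, either $(\tilde v, \tilde k) = (\alpha v, \alpha^{-1} k)$ or $(\tilde v, \tilde k) = (\alpha k, \alpha^{-1} v)$ for some $\alpha \in \mathbb{R} \setminus \{0\}$. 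The first case recovers a rescaling of $W$; the second gives $(A', V') = (\alpha^{-1}(v \otimes q), \alpha(h \otimes k))$, which is $(v \otimes q,\, h \otimes k)$ up to rescaling. A direct computation confirms $\varphi_{(v \otimes q,\, h \otimes k)} = \varphi_W$, since both sides evaluate in column $i$ to $(q^\transp x_i) \bigl( \sum_j (v^\transp x_j)(k^\transp x_j) \bigr) h$. The main obstacle I anticipate is this last step, especially the degenerate sub-case where $v$ and $k$ are parallel: there both factorization alternatives collapse into the rescaling orbit of $W$, so that $W' = (v \otimes q, h \otimes k)$ is itself a rescaling of $W$ and the theorem's two-point description of the fiber consistently reduces to one.
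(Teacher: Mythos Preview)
Your proof is correct and takes a genuinely different route from the paper's. The paper first isolates the pure-cubic term $Vx_1\,(x_1^\transp A x_1) = V'x_1\,(x_1^\transp A' x_1)$ and invokes irreducibility of quadratic forms of rank $\geq 2$ together with unique factorization of polynomials to conclude (up to scalar) that $V = V'$ and that the \emph{symmetric parts} of $A$ and $A'$ agree; it then needs a second step, using the mixed bilinear term, to upgrade this to $A = A'$ as full bilinear forms. You instead go straight to the mixed term and package it as a rank-$\leq 1$ matrix identity $(Vx)(x^\transp A) = (V'x)(x^\transp A')$, extract the scalar $\lambda(x)$ from uniqueness of rank-$1$ factorizations, and establish its constancy by an evaluation-at-sums argument exploiting $\textnormal{rk}(V) \geq 2$ (or $\textnormal{rk}(A) \geq 2$). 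Your route is more elementary in that it avoids the irreducibility-of-quadrics input and the two-step structure, at the cost of a mild density argument (ensuring $x_1 + x_2 \in U$ and globalizing the constancy of $\lambda$ via a third point). Both proofs converge on unique factorization of a product of linear forms in the rank-$1$/rank-$1$ case, and your handling of the degenerate sub-case $v \parallel k$ is a nice touch the paper leaves implicit.
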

\begin{proof}
See Appendix \ref{appendix:fiberattn}.
\end{proof}

Note that the second condition of Theorem \ref{thm:fiberattn} is negligible, i.e., it does not hold for almost all weights $W$, even under the constraint $\textnormal{rk}(A) \leq a$. Moreover, if  $d,d' \geq 2$ or $d,a \geq 2$, the first condition is negligible as well. Therefore, the generic fibers are one-dimensional. As a consequence, it is possible to compute the dimension (in the sense of algebraic geometry) of the neuromanifold, even when parametrized via queries and keys, or equivalently, when $A$ is restricted to have rank $\leq a$.
\begin{cor}
Suppose that $t\geq 2$ and that $d,d' \geq 2$ or $d,a \geq 2$. The dimension of the neuromanifold is:
    \begin{equation}\label{eq:neurodim}
        \textnormal{dim}\left( \mathcal{M}_{d, d',a} \right) = \begin{cases}
            2ad + dd' - a^2 -1 & \textnormal{if} \ a \leq d, \\ 
            d^2 + dd' -1 & \textnormal{otherwise}. 
        \end{cases} 
    \end{equation}
\end{cor}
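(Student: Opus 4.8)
The plan is to factor the parametrization through the attention matrix and then combine the generic fiber dimension of Theorem~\ref{thm:fiberattn} with the known dimension of the determinantal variety. Write the parametrization map as the composition of $(Q,K,V)\mapsto(A,V)=(K^\transp Q,V)$ followed by $\psi\colon(A,V)\mapsto\varphi_W$. The image of the first map is $\mathcal{D}_a\times\mathbb{R}^{d'\times d}$, where $\mathcal{D}_a\subseteq\mathbb{R}^{d\times d}$ is the determinantal variety of matrices of rank at most $a$ (all of $\mathbb{R}^{d\times d}$ when $a\geq d$). This set is irreducible, and by the discussion following Lemma~\ref{lemm:matfact} its dimension equals $2ad-a^2$ when $a\leq d$ and $d^2$ when $a>d$, the two expressions agreeing at $a=d$. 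Hence $\mathcal{M}_{d,d',a}=\psi\bigl(\mathcal{D}_a\times\mathbb{R}^{d'\times d}\bigr)$, and it remains to compute the dimension of this image.

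By the fiber dimension theorem, for a polynomial map out of an irreducible variety the dimension of the closure of the image equals the dimension of the source minus the dimension of the generic fiber; over $\mathbb{R}$ one complexifies, which is harmless since all genericity conditions used below are cut out by polynomial systems and thus persist over $\mathbb{C}$. The source $\mathcal{D}_a\times\mathbb{R}^{d'\times d}$ is irreducible of dimension $\min(2ad-a^2,d^2)+dd'$, so the corollary reduces to showing that the generic fiber of $\psi$, restricted to this source, is one-dimensional. Since $t\geq 2$, Theorem~\ref{thm:fiberattn} applies, and it suffices to check that the hypotheses $d,d'\geq 2$ or $d,a\geq 2$ place a generic $(A,V)\in\mathcal{D}_a\times\mathbb{R}^{d'\times d}$ in the third case of that theorem: a generic $A\in\mathcal{D}_a$ has rank $\min(a,d)$ and a generic $V$ has rank $\min(d',d)$, so if $d,a\geq 2$ then $\textnormal{rk}(A)\geq 2$, while if $d,d'\geq 2$ (and possibly $a=1$) then $\textnormal{rk}(V)\geq 2$; either way the rank-one case is avoided on a dense open subset. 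The vanishing case $\varphi_W=0$ is likewise avoided on a dense open subset, since $\psi$ does not vanish identically on $\mathcal{D}_a\times\mathbb{R}^{d'\times d}$ (which is immediate from Equation~\ref{eq:attnformula}) and $\{(A,V):\varphi_W=0\}$ is Zariski closed. Consequently the generic fiber of $\psi$ consists exactly of the rescalings $(\lambda A,\lambda^{-1}V)$, $\lambda\neq 0$.

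It then remains to verify that restricting to $\mathcal{D}_a$ does not inflate this fiber: the rescaling line is a cone inside $\mathcal{D}_a\times\mathbb{R}^{d'\times d}$ (the determinantal variety is a cone and $\mathbb{R}^{d'\times d}$ is a linear space), so the generic fiber of the restricted map is still one-dimensional. Subtracting $1$ from $\dim\bigl(\mathcal{D}_a\times\mathbb{R}^{d'\times d}\bigr)=\min(2ad-a^2,d^2)+dd'$ yields precisely the formula~\eqref{eq:neurodim}. The geometric content is thus entirely supplied by Theorem~\ref{thm:fiberattn} and the dimension of the determinantal variety; the only point requiring care is the routine bookkeeping around genericity over $\mathbb{R}$ versus $\mathbb{C}$ and the ranks of generic matrices in $\mathcal{D}_a$ and in $\mathbb{R}^{d'\times d}$ under the stated hypotheses.
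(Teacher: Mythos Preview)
Your proof is correct and follows essentially the same approach as the paper's: factor the parametrization through the attention matrix, invoke Theorem~\ref{thm:fiberattn} to conclude that the generic fibers are one-dimensional, and subtract this from the dimension of the determinantal variety (plus $dd'$ for the value weights). You simply spell out in more detail the genericity bookkeeping and the fiber-dimension argument that the paper compresses into a one-line appeal to these two facts.
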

\begin{proof}
The formula follows from the fact that the generic fibers of the parametrization are one-dimensional and that the dimension of the determinantal variety is $2\alpha d + dd' - \alpha^2$, where $\alpha=\min \{a, d \}$ (see discussion after Lemma \ref{lemm:matfact}).  
\end{proof}


\subsection{Single-Layer Geometry}\label{sec:geometry}
We will now describe the geometry of the neuromanifold of a single layer in more detail.
We will return to the question of identifiability for deep networks in Section~\ref{sec:deep}.
Throughout this section, we assume that $t \geq 2$ and that either $d,d' \geq 2$ or $d,a \geq 2$.

\begin{thm}
    \label{thm:geometry}
    The neuromanifold $\mathcal{M}_{d,d',a}$ is closed in the Euclidean topology.
    Its (relative) boundary points are those $\varphi_{(A,V)}$ of the form 
    $A = k \otimes q$ and $V = h \otimes k$ for some $q,k \in \mathbb{R}^d, h \in \mathbb{R}^{d'}$.
    Moreover, $\mathcal{M}_{d,d',a}$ is \emph{not} a smooth manifold:
    its singular points are the $\varphi_{(A,V)}$ satisfying $\textnormal{rk}(A) \textnormal{rk}(V) \leq 1$.  
\end{thm}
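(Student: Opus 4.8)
The plan is to prove the three assertions — Euclidean closedness, the boundary description, and the singular-locus description — in turn, with Theorem~\ref{thm:fiberattn} (and the identical statement over $\mathbb{C}$) as the main input; throughout, write $\mathcal{V}=\overline{\mathcal{M}_{d,d',a}}$ for the Zariski closure. For \emph{closedness}, since $\varphi_W$ is bilinear in $W=(A,V)$ I would use the rescaling symmetry to normalise. With $\alpha=\min\{a,d\}$, the set $\mathcal{K}=\{(A,V):\|A\|_F=\|V\|_F=1,\ \textnormal{rk}(A)\le\alpha\}$ is compact, so $E:=\{\varphi_{(A,V)}:(A,V)\in\mathcal{K}\}$ is compact; by the second bullet of Theorem~\ref{thm:fiberattn}, $\varphi_{(A,V)}=0$ forces $A=0$ or $V=0$, so $0\notin E$; and by bilinearity every nonzero $\varphi_W\in\mathcal{M}_{d,d',a}$ is the positive multiple $\|A\|_F\|V\|_F\cdot\varphi_{(A/\|A\|_F,\,V/\|V\|_F)}$ of a point of $E$. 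Hence $\mathcal{M}_{d,d',a}=\{0\}\cup\mathbb{R}_{>0}\cdot E$, a cone generated by a compact set avoiding the origin, which is closed (from $t_ne_n\to y$ with $e_n\in E$ one gets the $t_n$ bounded, hence a convergent subsequence and $y\in\mathbb{R}_{\ge0}\cdot E$). The same argument over $\mathbb{C}$ makes the complexified image Euclidean-closed, hence Zariski-closed, so it equals $\mathcal{V}$; consequently every point of $\mathcal{V}$ is $\varphi_W$ for some complex $W=(A,V)$ with $\textnormal{rk}(A)\le a$, and Theorem~\ref{thm:fiberattn} (over $\mathbb{C}$) describes all its fibers.

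Call $W=(A,V)$ \emph{good} if $\textnormal{rk}(A)\,\textnormal{rk}(V)\ge2$. By the third bullet of Theorem~\ref{thm:fiberattn} the fiber through a good $W$ is exactly the rescaling line, a free $\mathbb{R}^\times$-action; a direct computation of the differential then shows the parametrization has the expected rank there, so — using the known value of $\dim\mathcal{M}_{d,d',a}$ — $\mathcal{M}_{d,d',a}$ is a smooth boundaryless manifold, and $\mathcal{V}$ is smooth, at every good $\varphi_W$. It remains to treat the locus $\textnormal{rk}(A)\,\textnormal{rk}(V)\le1$, which inside $\mathcal{M}_{d,d',a}$ consists of $0$ together with the functions $\varphi_{(k\otimes q,\,h\otimes v)}$ for $q,k,v\in\mathbb{R}^d\setminus\{0\}$, $h\in\mathbb{R}^{d'}\setminus\{0\}$.

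On this locus $\varphi_{(k\otimes q,\,h\otimes v)}(X)=(v^{\transp}XX^{\transp}k)\,(q^{\transp}X)\,h=\textnormal{tr}(S\,XX^{\transp})\,(q^{\transp}X)\,h$ with $S=\tfrac12(kv^{\transp}+vk^{\transp})$ symmetric of rank $\le2$, and over $\mathbb{C}$ these $S$ are exactly the symmetric rank-$\le2$ matrices. Tracking complex conjugation through the two-orbit fibers of the first bullet of Theorem~\ref{thm:fiberattn}, a real $\varphi_{(k\otimes q,\,h\otimes v)}$ comes from \emph{non}-real weights precisely when the fiber's two rescaling orbits are exchanged by conjugation, which forces (up to scalars) $q,h$ real and $v=\bar k$ with $k=u+iw$, $u,w\in\mathbb{R}^d$ linearly independent — equivalently $S=2(uu^{\transp}+ww^{\transp})$ definite of rank $2$. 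Thus $\mathcal{V}_{\mathbb R}\setminus\mathcal{M}_{d,d',a}$ is exactly this definite-$S$ family, and the closure of that family meets $\mathcal{M}_{d,d',a}$ precisely where $S$ degenerates, i.e.\ where $v\parallel k$ (so $S$ is a multiple of $kk^{\transp}$) or where $k$, $q$, or $h$ vanishes. Hence the relative boundary $\mathcal{M}_{d,d',a}\cap\overline{\mathcal{V}_{\mathbb R}\setminus\mathcal{M}_{d,d',a}}$ equals $\{\varphi_{(A,V)}:A=k\otimes q,\ V=h\otimes k\}$ (with $k=0$ giving $0$). For the singular locus: $0$ is singular because $\mathcal{M}_{d,d',a}$ is a cone that is not a linear subspace; a point $\varphi_{(k\otimes q,\,h\otimes v)}$ with $v\not\parallel k$ is singular because its fiber has two distinct real orbits (the $k\leftrightarrow v$ swap), so locally $\mathcal{M}_{d,d',a}$ is two smooth sheets crossing; and a point with $v\parallel k$ is singular because these two sheets there coalesce into a pinch (Whitney-umbrella-type) singularity. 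This yields $\textnormal{Sing}(\mathcal{M}_{d,d',a})=\{\varphi_{(A,V)}:\textnormal{rk}(A)\,\textnormal{rk}(V)\le1\}$.

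The delicate steps are the local analyses at the rank-one locus: rigorously putting $\mathcal{V}$ (and $\mathcal{M}_{d,d',a}$ inside it) into normal form as two transversely crossing smooth sheets when $v\not\parallel k$ and as a Whitney umbrella when $v\parallel k$, and likewise carrying out the rank verification on the good locus. The main obstacle is that when $a\ge2$ the source of the parametrization (a determinantal variety) is itself singular along the rank-one preimages, so one must analyse $\varphi$ near its fibers there via second-order Taylor expansions, using the \emph{complete} fiber description of Theorem~\ref{thm:fiberattn} rather than only the generic fiber.
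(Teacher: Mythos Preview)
Your closedness and boundary arguments are correct and parallel the paper's: the paper also reduces closedness to compactness of the source (phrased projectively, via a morphism $\overline{\varphi}\colon\mathbb{P}\mathbb{R}^{d\times d}\times\mathbb{P}\mathbb{R}^{d\times d'}\to\mathbb{P}\symmm{3}{\mathbb{R}^{d\times t}}{\mathbb{R}^{d'\times t}}$, rather than unit-Frobenius normalization), and computes the boundary by identifying the real points of the complex image that lack a real parametrization, arriving at exactly your $v=\bar k$ description with $k\in\mathbb{C}^d\setminus\mathbb{R}^d$.

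The gap is the singular-locus argument, and it is genuine: knowing the fiber has two orbits does not by itself force a crossing singularity (the two image sheets could share a tangent space), and the Whitney-umbrella normal form at $v\parallel k$ is not established. The paper bypasses all of this with a single algebraic-geometry lemma: for a finite birational morphism $f$ from a \emph{smooth} projective source onto $Y$, a point $y\in Y$ is singular iff $|f^{-1}(y)|\ge 2$ or $f^{-1}(y)$ contains a critical point \cite[Lemma~3.2]{kohn2017secants}. Since $\overline{\varphi}$ has smooth source and is finite birational by Theorem~\ref{thm:fiberattn}, the singular locus is immediately the union of the two-orbit-fiber locus ($\textnormal{rk}(A)=\textnormal{rk}(V)=1$) and the critical locus, which the paper computes directly (Lemma~\ref{lem:cripts}): the kernel of $(\dot A,\dot V)\mapsto \dot V XX^\transp AX+VXX^\transp\dot A X$ exceeds the rescaling line exactly when $A=k\otimes q$ and $V=h\otimes k$. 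That one differential computation simultaneously handles your ``expected rank on the good locus'' and replaces the entire local-model analysis you flag as delicate; and because it parametrizes from the full matrix space rather than the determinantal variety, it never has to confront the singular source you worry about in your last paragraph.
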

\begin{proof}
    This is an amalgamation of Corollaries \ref{cor:closed}, \ref{cor:sing}, and \ref{cor:boundary} in Appendix \ref{appendix:geometry}.
\end{proof}
Figure \ref{fig:segre} provides a visualization of $\mathcal{M}_{2,1,2}$ for $t=2$. The latter has dimension $5$ and is embedded in the $40$-dimensional space $\symmm{3}{\mathbb{R}^4}{\mathbb{R}^2}$. The illustration shows a rendering of the neuromanifold in a $3$-dimensional affine slice of the ambient space. This slice cuts a $2$-dimensional section of the neuromanifold. The yellow line denotes the singular locus, whose dotted segment emerges by taking the closure in the Zariski topology.

The above result has several consequences, from a machine learning perspective and, in particular, in terms of learning dynamics. The fact that the neuromanifold is closed in the Euclidean topology implies that it contains its limit points. In particular, when the model is trained via a dynamical system -- which is the case for gradient descent -- any training trajectory that converges to an equilibrium in the ambient space will converge within the neuromanifold. Moreover, singularities of neuromanifolds are a central focus in Information Geometry, and specifically in Singular Learning Theory \citep{watanabe2009algebraic}. According to the latter, singularities of neuromanifolds play a central role in deep learning, since the function learned by a neural network (via gradient descent) often corresponds to a singular point of the neuromanifolds [8].  In other words, singularities often attract the learning dynamics, resulting in a form of `implicit bias’ associated to the neural architecture. According to Theorem \ref{thm:geometry}, singularities of the neuromanifold arise exactly when both $A$ and $V$ have rank $\leq 1$ (or vanish). Therefore, this result suggests an implicit bias in attention mechanism towards inferring (extremely) low-rank functions. Such bias has been empirically observed in a variety of neural architectures \citep{amari2006singularities}, and our result might suggest a mathematical explanation to this phenomenon, at least in the single-layer and lightning case.



\subsection{Deep Networks}\label{sec:deep}
In this section, we completely describe the symmetries in the parameters of a generic function arising from a deep network of lightning self-attention layers.
We show that there are only three symmetries: 1) each layer can be scaled by a constant, 2) the keys and queries within each layer can be scaled by an invertible matrix as in Lemma \ref{lemm:matfact}, and 3) the output of one layer can be scaled by an invertible matrix if the next layer cancels out this scaling.
We now describe the latter type of parameter symmetry, 
before formally stating our main result in Theorem \ref{thm:mainfiber}.

To this end, consider dimension vectors $\mathbf{d}$, $\mathbf{a}$ and weights $\mathbf{W} = (\mathbf{A}, \mathbf{V}) $ of a network with $l$ layers. For $1 \leq i \leq l$, define:
\begin{equation}\label{eq:seconreparam}
L_i = \prod_{l-i \leq j < l}V_{l - j} \hspace{5em} M_i = L_{i-1}^\transp A_i L_{i-1}.
\end{equation}
Moreover, set $M_1 = A_1$. It follows immediately from Definition \ref{def:deepattn} that a deep self-attention network can be written in terms of $(\mathbf{M}, L)$, where $\mathbf{M} = (M_1, \ldots, M_l)$ and $L=L_l$. Therefore, we introduce a further re-parametrization and write, with abuse of notation, $\varphi_{\mathbf{W}}$ for $\mathbf{W} = (\mathbf{M}, L)$. We call the parameters in this parametrization \emph{virtual weights}.

This parametrization has symmetries. Namely, given invertible matrices $C_i \in \textnormal{GL}_{d_i}(\mathbb{R})$ for $1 \leq i < l$, consider:
\begin{equation}\label{eq:symmdeep}
    V'_i = C_i  V_iC_{i-1}^{-1}  \hspace{5em} A'_{i+1} = C_i^{-\transp} A_{i+1} C_i^{-1}. 
\end{equation}
In the above, we set $C_0$ and $C_l$ to the identity. Replacing $V_i$, $V_{i+1}$, $A_{i+1}$ with $V'_i$, $V'_{i+1}$, $A'_{i+1}$ does not alter the virtual weights $\mathbf{M}$ and $L$. Intuitively, this operation consists of transforming the output of the $i$-th layer and cancelling back the transformation in the next layer. The following result states that, for generic $\mathbf{V}$ and with an assumption on the dimensions $d_i$, the above procedure completely characterizes the generic fibers of the map $(\mathbf{A}, \mathbf{V}) \mapsto (\mathbf{M}, L)$.  


\begin{lemm}\label{lemm:M_fibers} Suppose that for some $\delta \in \mathbb{N}$ it holds that $d_i = \delta$ for all $0 < i < l$ and $d_0, d_l \geq \delta$. Let $(\mathbf{M}, L)$ be virtual weights (Equation \ref{eq:seconreparam}) obtained from both $(\mathbf{A}, \mathbf{V})$ and $(\mathbf{A}', \mathbf{V}')$. If $\textnormal{rk}(L) = \delta$ then for every $i$ there exists a unique $C_i \in\textnormal{GL}_{d_i}(\mathbb{R})$ such that $V_i'$ and $A_{i+1}'$ are obtained from $V_i$ and $A_{i+1}$ via Equation \ref{eq:symmdeep}. 
\end{lemm}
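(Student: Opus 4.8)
The plan is to reconstruct the matrices $C_i$ inductively, moving from the output side of the network ($i = l$) back toward the input, using the rank hypothesis on $L$ to get invertibility at each step. First, observe that $L = L' = \prod_{j} V_{l-j}$ with $\textnormal{rk}(L) = \delta$ forces every intermediate $V_i$ (for $1 < i \le l$) to have rank exactly $\delta$, since $L$ factors through each of these maps and $\delta$ is the smallest dimension in the chain; similarly $V_1$ and $V_l$ have rank $\delta$. Set $C_0 = C_l = I$ as prescribed. The key observation is that the virtual weights only see the composite products $L_i = V_{l-i+1}\cdots V_l$ (in the ordering of Equation \ref{eq:seconreparam}) and the conjugated forms $M_i = L_{i-1}^\transp A_i L_{i-1}$. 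So from $L_i = L_i'$ for all $i$ — which follows from $M_i = M_i'$ and $L = L'$ by unwinding the definitions, or can be taken as the natural intermediate claim — I would extract the $C_i$ one at a time.

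Concretely, I would proceed as follows. At the top layer, $L_1 = V_l = V_l' = L_1'$, and in general $L_{i-1}$ and $L_i$ differ by left-multiplication by $V_{l-i+1}$. Because $L_{i-1}$ has a left inverse (its rank is $\delta$ and it has $\delta$ columns after the first intermediate step — one has to be slightly careful at the two ends where dimensions are $\ge \delta$ rather than $= \delta$, and argue on the column space), the matrix $C_i$ is forced: it must be the unique linear map relating the two factorizations of $L_i$ through $L_{i-1}$, which exists because $L_i = L_i'$ and $L_{i-1}, L_{i-1}'$ have the same column space (both equal to that of $L_i$ pulled back appropriately). Having pinned down $C_i$ on the relevant column space, one checks $C_i \in \textnormal{GL}_{d_i}(\mathbb{R})$ using $\textnormal{rk}(L_i) = \delta = d_i$ for the interior indices, and then verifies that $V_i' = C_i V_i C_{i-1}^{-1}$ and $A_{i+1}' = C_i^{-\transp} A_{i+1} C_i^{-1}$ hold: the first from comparing the two factorizations of $L_i$, and the second from $M_{i+1} = M_{i+1}'$, i.e. $L_i^\transp A_{i+1} L_i = L_i'^\transp A_{i+1}' L_i'$, substituting $L_i' = $ (appropriate product) and using that $L_i$ has full column rank $\delta$ to cancel. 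Uniqueness of each $C_i$ is immediate once we know $L_{i-1}$ has trivial kernel on the right / full column rank, since then $C_i$ is determined on a spanning set.

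The main obstacle I anticipate is the bookkeeping at the two boundary layers, where $d_0$ and $d_l$ may be strictly larger than $\delta$. There, the $V_i$ are not square, so "$C_i$ is forced and invertible" needs the sharper statement that, although $L_{i-1}$ or $L_i$ is a tall/wide matrix, its rank is still $\delta$ and the relevant $C_i$ is a genuine element of $\textnormal{GL}_{d_i}(\mathbb{R})$ only for $0 < i < l$ (where $d_i = \delta$), while $C_0 = C_l = I$ are fixed by fiat and one must check the factorization equations still close up at those ends without a conjugating matrix there. This amounts to showing that the freedom at the boundary is already exhausted — i.e. $V_1' = C_1 V_1$ and $V_l' = V_l C_{l-1}^{-1}$ with no further ambiguity — which uses genericity of $\mathbf{V}$ (so that $V_1$ has full column rank $\delta = d_1$ and $V_l$ has full row rank $\delta = d_{l-1}$) to make the cancellations legitimate. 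Once this is handled, the induction is routine linear algebra.
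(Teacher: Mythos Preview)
Your proposal has a genuine gap. You assert that $L_i = L_i'$ for all $i$, claiming this ``follows from $M_i = M_i'$ and $L = L'$ by unwinding the definitions.'' This is false: the virtual weights consist only of $\mathbf{M} = (M_1, \ldots, M_l)$ and $L = L_l$, and the intermediate products $L_i$ for $i < l$ are \emph{not} determined by them. Concretely, if $(\mathbf{A}', \mathbf{V}')$ is obtained from $(\mathbf{A}, \mathbf{V})$ via Equation~\ref{eq:symmdeep} with some nontrivial $C_i$, then $L_i' = C_i L_i \neq L_i$ while still $(\mathbf{M}', L') = (\mathbf{M}, L)$. (Take $l=2$, $d_0=d_1=d_2=\delta$, $V_1=V_2=I$, $C_1 = 2I$: then $L_1' = 2I \neq I = L_1$.) Since your inductive extraction of the $C_i$ rests on comparing two factorizations of the \emph{same} $L_i$, the argument collapses; in fact, if $L_i = L_i'$ held for every $i$, your own reasoning would force $V_i = V_i'$ and hence $C_i = I$ throughout, which is exactly what the lemma says need not happen.

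The paper's proof avoids this by working only with the single equality $L_l = L_l'$ that is actually given. Under the rank hypothesis each interior $V_i$ (for $1<i<l$) is an invertible $\delta\times\delta$ matrix, so $L = V_l \cdots V_1 = V_l' \cdots V_1'$ are two rank-$\delta$ factorizations of the same matrix. Applying Lemma~\ref{lemm:matfact} at the successive splittings produces the $C_i \in \textnormal{GL}_\delta(\mathbb{R})$ with $V_i' = C_i V_i C_{i-1}^{-1}$; the by-product is $L_i' = C_i L_i$, not $L_i' = L_i$. Only then is $M_{i+1} = M_{i+1}'$ invoked: substituting $L_i' = C_i L_i$ and cancelling the surjective $L_i$ on both sides yields $A_{i+1}' = C_i^{-\transp} A_{i+1} C_i^{-1}$. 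Your boundary-layer discussion and your use of full-rank cancellation for the $A$-part are on the right track; the repair is to replace the false invariant $L_i = L_i'$ with the correct one $L_i' = C_i L_i$ and to build the $C_i$ from $L_l$ alone. (Separately, note that $L_i = V_i \cdots V_1$, so $L_1 = V_1$, not $V_l$ as you wrote.)
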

\begin{proof}
See Appendix \ref{appendix:M_fibers}.
\end{proof}
The assumption on the dimensions $d_i$ in the above result can be practically interpreted as a `bottleneck' architecture. Specifically, this architecture is equivalent to including a low-dimensional embedding and a high-dimensional unembedding layer, which is common in the literature, especially in an interpretability context \citep{elhage2021mathematical}. 

Before computing the fibers of the re-parametrization, it is convenient to rephrase the latter. The following result provides a recursive expression of a deep attention network in terms of $(\mathbf{M}, L)$. 
\begin{lemm}\label{lemm:induct}
For $X\in \mathbb{R}^{d_0 \times t}$, define inductively: 
\begin{equation}\label{eq:indformula}
    \begin{cases}
        D_{0} = I \\
        D_i = D_{i-1} X^\transp M_i X D_{i-1} X^\transp M_i^\transp X D_{i-1}. 
    \end{cases}
\end{equation}
Then: 
\begin{equation}\label{eq:outformula}
    \varphi_{\mathbf{W}}(X) = L_{l}X  \left( \prod_{1 \leq i \leq l}D_{l-i}X^\transp M_{l-i+1}X \right). 
\end{equation}
\end{lemm}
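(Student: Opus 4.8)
The plan is to prove Equation \ref{eq:outformula} by induction on the number of layers $l$, tracking carefully how the $D_i$ matrices encode the repeated appearance of the input $X$ across layers. The key observation is that a single lightning layer $\varphi_W(X) = VXX^\transp A^\transp V^\transp \dots$ — wait, more precisely $\varphi_W(X) = VXX^\transp A^\transp X$ in the notation $A = K^\transp Q$ reading off Equation \ref{eq:attnformula} as $\varphi_W(X)_i = \sum_j x_j^\transp A x_i\, Vx_j$, so $\varphi_W(X) = VX \cdot (X^\transp A X)$ where $X^\transp A X$ is a $t \times t$ matrix acting on the right. Actually I should be careful: $\sum_j (x_j^\transp A x_i) Vx_j = VX (X^\transp A X)_{\bullet i}$, so indeed $\varphi_W(X) = VX(X^\transp A X)$. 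First I would establish the base case $l=1$: here $D_0 = I$, and Equation \ref{eq:outformula} reads $\varphi_{\mathbf W}(X) = L_1 X D_0 X^\transp M_1 X = V_1 X X^\transp A_1 X$, which matches directly since $L_1 = V_1$ and $M_1 = A_1$.

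For the inductive step, I would feed the output $Y = \varphi_{W_1}(X) = V_1 X(X^\transp A_1 X)$ of the first layer into the remaining $l-1$ layers. By the induction hypothesis applied to the network with layers $2, \ldots, l$ — whose virtual weights are $\widetilde M_i = \widetilde L_{i-1}^\transp A_{i+1} \widetilde L_{i-1}$ with $\widetilde L_i = \prod V_{\bullet}$ over layers $2,\ldots,i+1$, and $\widetilde L_{l-1} = V_l \cdots V_2$ — we get a closed-form expression in $Y$ and the $\widetilde D_i$ built from $Y$. The crux is then a bookkeeping lemma: I claim that substituting $Y = V_1 X(X^\transp A_1 X)$ into the recursion for the $\widetilde D_i$ (built from $Y^\transp \widetilde M_i Y$) reproduces exactly the $D_i$ built from $X^\transp M_i X$, after shifting indices. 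The point is that $Y^\transp \widetilde M_{i} Y = (X^\transp A_1 X)^\transp X^\transp V_1^\transp \widetilde M_i V_1 X (X^\transp A_1 X)$, and $V_1^\transp \widetilde L_{i-1}^\transp A_{i+1} \widetilde L_{i-1} V_1 = L_i^\transp A_{i+1} L_i = M_{i+1}$ since $L_i = \widetilde L_{i-1} V_1$. So the symmetric matrix $X^\transp A_1 X$ gets sandwiched around each $M_{i+1}$-block, and one checks by a sub-induction that the $X^\transp A_1 X$ factors telescope into the structure of the $D$-recursion — specifically that $D_1 = D_0 X^\transp M_1 X D_0 X^\transp M_1^\transp X D_0 = (X^\transp A_1 X)(X^\transp A_1 X)^\transp$, which is precisely the $t \times t$ matrix $(Y^\transp Y / \text{stuff})$ — hmm, not quite, so the real content is matching $\widetilde D_{i-1}(Y) = $ (something built from $Y$) against $D_i(X)$.

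Concretely I would prove the auxiliary identity: with $P = X^\transp A_1 X$ (a $t \times t$ matrix, so $P^\transp = X^\transp A_1^\transp X$), and $Y = V_1 X P$, the matrices $\widetilde D_{i-1}$ formed by the recursion $\widetilde D_0 = I$, $\widetilde D_i = \widetilde D_{i-1} Y^\transp \widetilde M_{i+1} Y \widetilde D_{i-1} Y^\transp \widetilde M_{i+1}^\transp Y \widetilde D_{i-1}$ satisfy $\widetilde D_{i-1} = $ a conjugated/sandwiched version of $D_i$ by powers of $P$ and $P^\transp$ on the outside. Then plugging into the IH output formula $\varphi_{\text{layers }2..l}(Y) = \widetilde L_{l-1} Y \prod_{1 \le i \le l-1} \widetilde D_{l-1-i} Y^\transp \widetilde M_{l-i} Y$ and expanding $Y = V_1 XP$, $\widetilde L_{l-1} V_1 = L_l$, the $P$'s rearrange to give exactly $L_l X \prod_{1 \le i \le l} D_{l-i} X^\transp M_{l-i+1} X$. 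I expect the \textbf{main obstacle} to be the index bookkeeping in this telescoping: getting the ranges of the products right, confirming that the transposes $M_i^\transp$ appear in exactly the right positions (they come from the "second $X$" in each layer's $XX^\transp A^\transp X$ pattern), and verifying that the sandwiching factors of $P$ and $P^\transp$ cancel cleanly rather than accumulating. Once that combinatorial identity is nailed down, the rest is routine substitution.
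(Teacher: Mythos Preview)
Your approach is correct but differs from the paper's. The paper proceeds by induction on the layer index $i$ within a \emph{fixed} network: it sets $Y_i$ equal to the right-hand side of Equation~\ref{eq:outformula} (with $l$ replaced by $i$) and shows that $Y_i$ satisfies the same recurrence as the actual layer output $X_i$. The key auxiliary step there is the identity $D_i = Z_i Z_i^\transp$ with $Z_i = \prod_{1 \le j \le i} D_{i-j} X^\transp M_{i-j+1} X$, which encodes $X_i X_i^\transp = L_i X D_i (L_i X)^\transp$ and follows from a one-line sub-induction using the symmetry of $D_{i-1}$. Your route instead inducts on the total depth $l$, peeling off the first layer and applying the hypothesis to the shorter network evaluated at $Y = V_1 X P$ with $P = X^\transp A_1 X$. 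Your ``sandwiching'' intuition is right and can be stated cleanly: the auxiliary identity you need is exactly $D_{j+1} = P\,\widetilde D_j\,P^\transp$ (a single conjugation, not powers), which follows by a direct sub-induction from $Y^\transp \widetilde M_j Y = P^\transp (X^\transp M_{j+1} X) P$ and the base $D_1 = PP^\transp$. After regrouping, the leftover $P$ on the far right becomes the missing factor $D_0 X^\transp M_1 X$, and the products match. The paper's route is slightly more economical because it avoids re-indexing a second $D$-recursion, and its Gram-matrix interpretation $D_i = Z_i Z_i^\transp$ is arguably the cleaner conceptual takeaway; your route is the more compositional one and works equally well once the conjugation identity is made explicit.
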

\begin{proof}
See Appendix \ref{appendix:induct_lemm_proof}.
\end{proof}

The above result can be rephrased without recursion. Given $X = (x_i)_{1 \leq i \leq t}$, since $XX^\top = \sum_{1 \leq i \leq t} x_i x_i^\top $, Equation \ref{eq:outformula} states that $\varphi_{\mathbf{W}}(X)_k$ can be written for every $k$ as:
\begin{equation}\label{eq:triadic}
    \sum_{k_1, \ldots, k_{\tilde{l}}}   L_l x_{k_{1 }}    \left( \prod_{1 \leq j \leq \tilde{l} - 1}
    x_{k_j}^\transp \widetilde{M}_j x_{k_{j+1}}   \right)  x_{\tilde{l}}^\transp A_1 x_{k} ,
\end{equation}
where $\tilde{l} = (3^l - 1) / 2 $ and $\widetilde{M}_j$ is defined as follows. Let $\alpha_j$ be the index of the first non-zero digit from the right of $j$ in base $3$ (i.e., its $3$-adic valuation plus one). Then $\widetilde{M}_j$ equals to $M_{\alpha_j}$ if this digit is $1$, and to $M_{\alpha_j}^\transp$ if it is $2$. We provide a diagram illustrating Equation \ref{eq:triadic} in Figure \ref{fig:tritree}, where the matrices in the equation correspond, in order, to the regions between the lines.

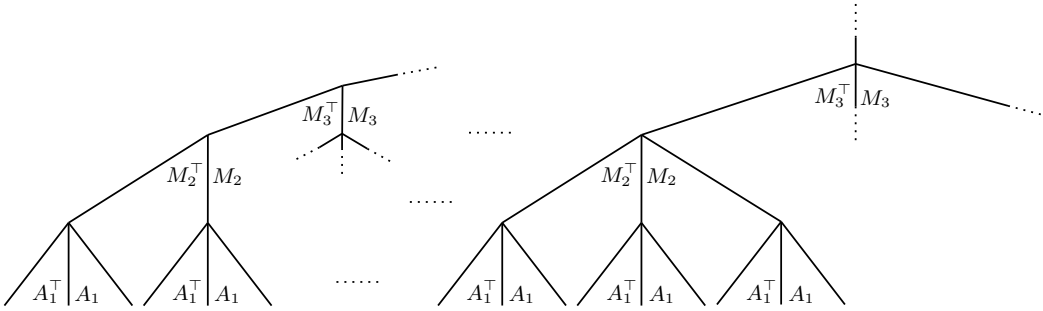
\begin{figure}[!h]

\tikzset{every picture/.style={line width=0.75pt}} 
\begin{center}
\scalebox{0.8}{

\begin{tikzpicture}[x=0.75pt,y=0.75pt,yscale=-1,xscale=1]

\draw    (377.96,148.48) -- (326.58,135.33) ;
\draw    (326.58,135.33) -- (326.63,168.72) ;
\draw    (326.58,135.33) -- (194.42,176.63) ;
\draw  [dash pattern={on 0.84pt off 2.51pt}]  (326.63,168.72) -- (326.53,189.41) ;
\draw  [dash pattern={on 0.84pt off 2.51pt}]  (499.2,83.41) -- (374.93,121.31) ;
\draw    (326.58,135.33) -- (374.93,121.31) ;
\draw    (311.13,224.74) -- (311.13,275) ;
\draw    (311.13,224.99) -- (194.42,176.63) ;
\draw    (194.42,176.63) -- (194.47,239.88) ;
\draw    (364.93,275) -- (311.13,224.74) ;
\draw    (311.13,224.74) -- (257.68,275) ;
\draw    (194.47,224.77) -- (194.47,275) ;
\draw    (248.28,275) -- (194.47,224.77) ;
\draw    (194.47,224.77) -- (141.02,275) ;
\draw    (77.39,224.16) -- (77.39,274) ;
\draw    (131.2,275) -- (77.39,224.16) ;
\draw    (77.39,224.16) -- (23.94,275) ;
\draw    (194.42,176.63) -- (77.39,224.2) ;
\draw  [dash pattern={on 0.84pt off 2.51pt}]  (394.53,153.41) -- (377.96,148.48) ;
\draw    (584.48,55.8) -- (584.48,57.09) -- (584.53,89.19) ;
\draw    (584.48,55.8) -- (499.2,83.41) ;
\draw  [dash pattern={on 0.84pt off 2.51pt}]  (584.48,83.14) -- (584.48,113.35) ;
\draw    (466.23,227.71) -- (584.48,177.15) ;
\draw    (584.48,177.15) -- (584.53,225.7) ;
\draw    (584.53,225.7) -- (584.48,275) ;
\draw    (584.53,225.7) -- (530.01,275) ;
\draw    (466.23,227.71) -- (466.23,275) ;
\draw    (520.03,275) -- (466.23,227.71) ;
\draw    (466.23,227.71) -- (412.78,275) ;
\draw    (584.48,129.18) -- (584.48,177.15) ;

\draw    (536.53,153.41) -- (584.48,135.33) ;
\draw  [dash pattern={on 0.84pt off 2.51pt}]  (536.53,153.41) -- (520.53,159.5) ;
\draw    (584.48,113.35) -- (584.48,129.18) ;

\draw (338.17,148) node [anchor=north west][inner sep=0.75pt]  [font=\small] [align=left] {$\displaystyle M_{3}^{\top }$};
\draw (293.14,151) node [anchor=north west][inner sep=0.75pt]  [font=\small] [align=left] {$\displaystyle M_{3}$};
\draw (559,73.4) node [anchor=north west][inner sep=0.75pt]  [font=\small] [align=left] {$\displaystyle M_{l}$};
\draw (158.1,200) node [anchor=north west][inner sep=0.75pt]  [font=\small] [align=left] {$\displaystyle M_{2}$};
\draw (208.86,197) node [anchor=north west][inner sep=0.75pt]  [font=\small] [align=left] {$\displaystyle M_{2}^{\top }$};
\draw (52.67,255) node [anchor=north west][inner sep=0.75pt]  [font=\small] [align=left] {$\displaystyle M_{1}$};
\draw (82.72,252) node [anchor=north west][inner sep=0.75pt]  [font=\small] [align=left] {$\displaystyle M_{1}^{\top }$};
\draw (559,200) node [anchor=north west][inner sep=0.75pt]  [font=\small] [align=left] {$\displaystyle M_{2}$};
\draw (559,152) node [anchor=north west][inner sep=0.75pt]  [font=\small] [align=left] {$\displaystyle M_{3}$};
\draw (169.75,255) node [anchor=north west][inner sep=0.75pt]  [font=\small] [align=left] {$\displaystyle M_{1}$};
\draw (199.81,252) node [anchor=north west][inner sep=0.75pt]  [font=\small] [align=left] {$\displaystyle M_{1}^{\top }$};
\draw (316.46,252) node [anchor=north west][inner sep=0.75pt]  [font=\small] [align=left] {$\displaystyle M_{1}^{\top }$};
\draw (286.4,255) node [anchor=north west][inner sep=0.75pt]  [font=\small] [align=left] {$\displaystyle M_{1}$};
\draw (559,255) node [anchor=north west][inner sep=0.75pt]  [font=\small] [align=left] {$\displaystyle M_{1}$};
\draw (441.5,255) node [anchor=north west][inner sep=0.75pt]  [font=\small] [align=left] {$\displaystyle M_{1}$};
\draw (470,252) node [anchor=north west][inner sep=0.75pt]  [font=\small] [align=left] {$\displaystyle M_{1}^{\top }$};

\end{tikzpicture}
}
\end{center}
\caption{Diagrammatic illustration of Equation \ref{eq:triadic}.}
\label{fig:tritree}
\end{figure}
    
We now discuss the fibers, as anticipated. Firstly, similarly to the one-layer case, $\mathbf{W} = (\mathbf{M}, L)$ can be rescaled without altering the corresponding function. More precisely, if $M'_i = \lambda_i M_i $ and $L' = \rho L$ for some $\lambda_i, \rho \in \mathbb{R} \setminus \{ 0 \}$, then $\varphi_{\mathbf{W}'} = \varphi_{\mathbf{W}}$ if: 
\begin{equation}\label{eq:scalarconstr}
    \prod_{1 \leq i \leq l}(\lambda_i )^{3^{l-i}} = \frac{1}{\rho}.
\end{equation}
Conversely, we now show that rescaling characterizes the generic fibers. The following is the main result of this work.  

\begin{thm}\label{thm:mainfiber}
Let $\mathbf{W} = (\mathbf{M}, L)$. Suppose that $t\geq 3$ and:
\begin{itemize}
    \item $L \not = 0$,  
    \item For $1 \leq i \leq l$, $\textnormal{rk}(M_i) \geq 2$, 
    \item For $1 < i \leq l$, $M_i$ is not skew-symmetric\footnote{A square matrix $M$ is skew-symmetric if $M^\transp = -M$. }.
\end{itemize}
Then the fiber of $\varphi_\mathbf{W}$ consists of rescalings of $\mathbf{W}$.  
\end{thm}
\begin{proof}
The proof is highly technical. Here, we provide a short summary; for the full proof, see Appendix \ref{appendix:mainfiber}. Using unique factorization of polynomials, we first show that, up to rescalings, the fiber consists of $\mathbf{W}' = (\mathbf{M}', L)$, where $M_i' = M_i + \Sigma_i$ and $\Sigma_i$ is a skew-symmetric matrix. We then proceed to show that $\Sigma_i = 0$ via an induction argument over $i$. The argument involves analyzing specific monomials where $\Sigma_i$ appears, and proving that most of them vanish due to the symmetries of the unrolled recursion tree in Figure \ref{fig:tritree}.
\end{proof}
The conditions of the above theorem are generic if $d_i, a_i \geq 2$  for all $i$.
To summarize, the theorem states that a generic function in the neuromanifold has precisely three types of symmetries in its parameters: 
1) scaling each layer by a constant as in Equation \ref{eq:scalarconstr}, 2) scaling the keys and queries of each layer by invertible matrices as in Lemma \ref{lemm:matfact}, and 3) scaling the output of one layer by an invertible matrix and reverting this scaling in the next layer as in Equation \ref{eq:symmdeep}.
Similarly to the one-layer case, this results leads to the computation of the dimension of the neuromanifold.
\begin{cor}\label{cor:dimneur}
Suppose that $t\geq 3$, $a_i \geq 2$ for all $i$, and for some $\delta \geq 2 $ it holds that $d_i = \delta$ for all $0 < i < l$ and $d_0, d_l \geq \delta$. The dimension of the neuromanifold is:
\begin{equation}\label{eq:lightdim}
2 \alpha_1 d_0 - \alpha_1^2   + \delta (d_0 +  d_l) - \delta^2 - l   + \sum_{1 < i \leq  l}( 2\alpha_i\delta - \alpha_i^2),
\end{equation}
where $\alpha_i = \min  \{a_i, d_{i-1} \}$. 
\end{cor}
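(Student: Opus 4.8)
The plan is to compute the dimension of $\mathcal{M}_{\mathbf{d}, \mathbf{a}}$ as the difference between the number of parameters in the query-key-value parametrization and the generic fiber dimension of the composed map $(\mathbf{Q}, \mathbf{K}, \mathbf{V}) \mapsto \varphi_{\mathbf{W}}$. We factor this map through two intermediate parametrizations: first $(\mathbf{Q}, \mathbf{K}, \mathbf{V}) \mapsto (\mathbf{A}, \mathbf{V})$ (where $A_i = K_i^\transp Q_i$), then $(\mathbf{A}, \mathbf{V}) \mapsto (\mathbf{M}, L)$ (the virtual weights of Equation \ref{eq:seconreparam}), then $(\mathbf{M}, L) \mapsto \varphi_{\mathbf{W}}$. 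The dimension of the neuromanifold equals the dimension of the parameter space minus the sum of the generic fiber dimensions of these three maps, provided the relevant genericity conditions are simultaneously satisfiable — which they are under the stated hypotheses $a_i \geq 2$, $d_i = \delta \geq 2$, $d_0, d_l \geq \delta$.

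First I would handle the innermost map. By Theorem \ref{thm:mainfiber}, for generic $(\mathbf{M}, L)$ satisfying $\textnormal{rk}(M_i) \geq 2$ and $M_i$ not skew-symmetric for $i > 1$ (conditions which hold generically when $d_i, a_i \geq 2$), the fiber of $\varphi_\mathbf{W}$ consists only of rescalings, so by the one-dimensional constraint of Equation \ref{eq:scalarconstr} the fiber dimension is $l - 1$: there are $l$ scaling parameters $\lambda_1, \ldots, \lambda_l$ together with $\rho$, subject to the single relation in Equation \ref{eq:scalarconstr}, but $\rho$ is determined by the $\lambda_i$, so the symmetry group is $l$-dimensional — wait, more carefully: $L' = \rho L$ and $M_i' = \lambda_i M_i$ with $\prod (\lambda_i)^{3^{l-i}} = 1/\rho$ gives exactly $l$ free parameters $(\lambda_1, \ldots, \lambda_l)$, hence the generic fiber of $(\mathbf{M}, L) \mapsto \varphi_\mathbf{W}$ has dimension $l$. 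Next, the map $(\mathbf{A}, \mathbf{V}) \mapsto (\mathbf{M}, L)$: by Lemma \ref{lemm:M_fibers}, for generic weights with $\textnormal{rk}(L) = \delta$, the fiber is parametrized by the matrices $C_i \in \textnormal{GL}_{d_i}(\mathbb{R})$ for $1 \leq i < l$ via Equation \ref{eq:symmdeep}, contributing $\sum_{1 \leq i < l} d_i^2 = (l-1)\delta^2$ to the fiber dimension. Finally, for each layer the map $(Q_i, K_i) \mapsto A_i = K_i^\transp Q_i$ has, by Lemma \ref{lemm:matfact}, generic fibers isomorphic to $\textnormal{GL}_{\alpha_i}(\mathbb{R})$ with $\alpha_i = \min\{a_i, d_{i-1}\}$, so these contribute $\sum_{1 \leq i \leq l} \alpha_i^2$.

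Now I would assemble the count. The total number of parameters in the $(\mathbf{Q}, \mathbf{K}, \mathbf{V})$ parametrization is $\sum_i d_i(2a_i + d_{i-1})$. However, it is cleaner to pass through the image of $(\mathbf{Q}, \mathbf{K}) \mapsto \mathbf{A}$ first: the dimension of the space of tuples $(\mathbf{A}, \mathbf{V})$ with $\textnormal{rk}(A_i) \leq a_i$ is $\sum_{1 \leq i \leq l}(2\alpha_i d_{i-1} - \alpha_i^2) + \sum_{1 \leq i \leq l} d_i d_{i-1}$, the first sum being the dimension of the determinantal variety for each $A_i$ and the second the dimension of the space of $V_i$. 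Substituting $d_i = \delta$ for $0 < i < l$: the value-weight contribution $\sum_i d_i d_{i-1} = \delta d_0 + (l-2)\delta^2 + \delta d_l$ for $l \geq 2$ (and one checks the $l=1$ case separately, recovering the earlier corollary), while $\sum_i (2\alpha_i d_{i-1} - \alpha_i^2) = 2\alpha_1 d_0 - \alpha_1^2 + \sum_{1 < i \leq l}(2\alpha_i \delta - \alpha_i^2)$. Subtracting the fiber dimension of $(\mathbf{A},\mathbf{V}) \mapsto \varphi_\mathbf{W}$, which is $(l-1)\delta^2 + l$ by the two innermost steps, gives
\begin{equation}
2\alpha_1 d_0 - \alpha_1^2 + \delta d_0 + (l-2)\delta^2 + \delta d_l + \sum_{1 < i \leq l}(2\alpha_i \delta - \alpha_i^2) - (l-1)\delta^2 - l,
\end{equation}
and $(l-2)\delta^2 - (l-1)\delta^2 = -\delta^2$, yielding exactly Equation \ref{eq:lightdim}.

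The main obstacle is justifying that the generic fiber dimension of the composition is the sum of the generic fiber dimensions of the three maps in the factorization. This is not automatic: one must check that a generic point in the image of one map has a preimage at which the next map is also generic, i.e., that the genericity conditions in Theorem \ref{thm:mainfiber}, Lemma \ref{lemm:M_fibers}, and Lemma \ref{lemm:matfact} are compatible and jointly satisfiable on a dense set. Concretely, I would argue that a generic $(\mathbf{A}, \mathbf{V})$ with the rank constraints produces virtual weights $(\mathbf{M}, L)$ with $\textnormal{rk}(L) = \delta$ (which needs $d_0, d_l \geq \delta$) and with each $M_i$ of rank $\geq 2$ and not skew-symmetric, so that all three stratified-fiber statements apply simultaneously; then the fiber-dimension additivity follows from the theorem on dimension of fibers of dominant morphisms applied to the composite restricted to the appropriate open dense subsets. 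The remaining work — substituting $d_i = \delta$ and simplifying — is the routine bookkeeping shown above, together with separately verifying consistency with the single-layer formula in Equation \ref{eq:neurodim} when $l = 1$.
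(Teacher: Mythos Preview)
Your proposal is correct and follows essentially the same approach as the paper: compute the dimension of the $(\mathbf{A},\mathbf{V})$-parameter space via the determinantal-variety formula, then subtract the generic fiber dimensions $(l-1)\delta^2$ from Lemma~\ref{lemm:M_fibers} and $l$ from Theorem~\ref{thm:mainfiber}. Your additional care about compatibility of the genericity conditions and the detour through $(\mathbf{Q},\mathbf{K})$ are not wrong but are not needed, since the paper works directly with the $(\mathbf{A},\mathbf{V})$-space restricted to the determinantal varieties.
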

\begin{proof}
See Appendix \ref{appendix:dimneur}. 
\end{proof}
As discussed in Section \ref{sec:introrel}, an exact expression for the dimension of the neuromanifold enables one to estimate the sample complexity of the model.While the latter is commonly measured as the number of parameters, the dimension -- which constitutes the theoretically-correct estimate -- can, sometimes, significantly differ from it. To illustrate this, in some instances the dimension of queries/keys is set to be equal to the embedding dimension \citep{vaswani2017attention}, which translates to setting $\alpha_i = \delta$ for all $i$. Assuming $d_0 = d_l = \delta $, the number of parameters is $\frac{3}{2}d^2 l $, while according to Equation \ref{eq:lightdim} the dimension is $d^2( l + 1) - l$. Asymptotically, their ratio is $\frac{3}{2}$, i.e., in this case the number of parameters is $50 \%$ larger than the dimension.


\subsection{Traditional Self-Attention}\label{sec:tradattn}
In this section, we briefly consider the case of traditional self-attention mechanisms, where the attention weights are normalized. We compute the fibers in the single-layer case and state a conjecture around the deep case. 

In order to introduce the normalization, consider a map $\mathcal{S} \colon \mathbb{R} \rightarrow \mathbb{R}_{> 0}$ that is injective and such that $\mathcal{S}(0) = 1$. A typical choice is $\mathcal{S}(x) = e^{ x / \tau}$, where $\tau \in \mathbb{R}_{>0}$ is the temperature hyperparameter. The traditional self-attention mechanism is then defined for $W = (A, V)$ and $X \in \mathbb{R}^{d \times t}$ as:
\begin{equation}\label{eq:tradattnformula}
 \varphi_{W}(X) =  \displaystyle \left( \frac{1}{\zeta_i} \sum_{1 \leq j \leq t} \mathcal{S}\left( x_i^\transp A x_j\right) \ Vx_j \right)_{1 \leq i \leq t} \hspace{3em} \zeta_i = \sum_{1 \leq k \leq t }\mathcal{S}\left( x_i^\transp A x_k \right).
\end{equation}
Note that, for simplicity, we adhere to the convention of parametrizing via the attention matrix. Intuitively, in Equation \ref{eq:tradattnformula} the attention weights $\mathcal{S}\left( x_i^\transp A x_j\right)$ are normalized to sum to $1$, forcing the model to `distribute' its attention across the input components. The following result describes the fibers of the parametrization, analogously to Theorem \ref{thm:fiberattn}. 
\begin{thm} \label{thm:classical_fiberattn}
Suppose that $t \geq 2$. If $\varphi_W = 0$, then $V = 0$. Otherwise, the fiber of $\varphi_W$ is a singleton $\{ W\}$. 
\end{thm}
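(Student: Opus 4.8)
The plan is to recover $W=(A,V)$ from the function $\varphi_W$ by evaluating it at carefully chosen inputs $X$, exploiting the injectivity of $\mathcal{S}$ to strip away the normalization. The key observation is that for a one-hot-like input one can control which attention weights are ``active''. Concretely, first I would plug in inputs $X$ whose columns are $x_i = x$, $x_j = 0$ for $j \neq i$ (or, if working with honest sequences, $x_i = x$ and the other columns scaled down); since $\mathcal{S}(0)=1$, the normalization constant $\zeta_i$ becomes a controlled sum, and the $i$-th output coordinate collapses to a rescaled multiple of $Vx$. Letting $x$ range over a basis of $\mathbb{R}^d$ then pins down $V$ up to a global scalar, and the value of that scalar is fixed by the explicit count of how many columns contribute the weight $\mathcal{S}(0)=1$. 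This already gives $V$ exactly, which immediately yields the ``$\varphi_W = 0 \Rightarrow V = 0$'' claim: if $V \neq 0$ then some $Vx \neq 0$ and the above evaluation is nonzero, so $\varphi_W \neq 0$.

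Next, assuming $V \neq 0$ (hence $\varphi_W \neq 0$), I would recover $A$. Pick $v \in \mathbb{R}^d$ with $Vv \neq 0$, and feed in a sequence with two nonzero columns: $x_i = u$ at position $i$ and $x_k = v$ at some other position $k$, with all remaining columns zero. Then $\zeta_i = \mathcal{S}(u^\transp A u) + \mathcal{S}(u^\transp A v) + (\text{count of zero columns})\cdot 1$, and the $i$-th output is $\tfrac{1}{\zeta_i}\big(\mathcal{S}(u^\transp A u)\,Vu + \mathcal{S}(u^\transp A v)\,Vv + 0\big)$. Since $V$ is already known, and since $Vu, Vv$ can be taken linearly independent for generic $u$ (or handled separately when $d' = 1$), one reads off the ratio $\mathcal{S}(u^\transp A v)/\mathcal{S}(u^\transp A u)$ and, together with $\zeta_i$, the individual values $\mathcal{S}(u^\transp A u)$ and $\mathcal{S}(u^\transp A v)$. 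Injectivity of $\mathcal{S}$ then converts these into the scalars $u^\transp A u$ and $u^\transp A v$. Ranging $u$ over a basis gives the diagonal entries of $A$ in that basis (from $u^\transp A u$) and enough mixed evaluations $u^\transp A v$ to reconstruct the symmetric part; getting the full bilinear form $u^\transp A v$ for all basis pairs $(u,v)$ in fact determines $A$ completely, with no symmetry ambiguity, because $\varphi_W$ depends on $A$ through the individual weights $x_i^\transp A x_j$, not merely through $A + A^\transp$.

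There is a subtlety to address: when $d' = 1$, the vectors $Vu$ and $Vv$ are scalars and cannot be linearly independent, so the two-column evaluation gives only one linear combination of $\mathcal{S}(u^\transp Au)$ and $\mathcal{S}(u^\transp A v)$ per choice of coefficients. I would resolve this by additionally varying the magnitude of the second column — replacing $x_k = v$ with $x_k = sv$ for a scalar parameter $s$ — which changes $u^\transp A(sv) = s\,u^\transp A v$ and the contribution $\mathcal{S}(s\,u^\transp Av)\cdot V(sv) = s\,\mathcal{S}(s\,u^\transp A v)\,Vv$ in a way that lets one separate the unknowns by looking at several values of $s$, again using injectivity of $\mathcal{S}$ and the known quantity $Vv$. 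I expect this low-dimensional-output case, along with the bookkeeping of the normalization constants $\zeta_i$ across the different test inputs, to be the main technical obstacle; the rest is a direct ``probe and invert'' argument. Since $A$ and $V$ are both recovered uniquely, the fiber of any nonzero $\varphi_W$ is the singleton $\{W\}$, completing the proof. Full details are deferred to the appendix.
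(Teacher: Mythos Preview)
Your overall probe-and-invert strategy is the same as the paper's, but there is a genuine gap in your first step. With $x_i = x$ and all other columns zero, the $i$-th output is
\[
\varphi_W(X)_i \;=\; \frac{\mathcal{S}(x^\transp A x)}{\mathcal{S}(x^\transp A x) + (t-1)}\; Vx,
\]
so the prefactor depends on both $x$ and the \emph{unknown} quadratic form $x^\transp A x$; it is not a global scalar, and letting $x$ run over a basis does not determine $V$ up to a single constant as you claim. (The phrase ``fixed by the explicit count of how many columns contribute $\mathcal{S}(0)=1$'' overlooks the factor $\mathcal{S}(x^\transp A x)$ appearing in both numerator and denominator.) The fix is immediate: read instead the output at some position $k \neq i$ of the same probe, which equals $\tfrac{1}{t}Vx$ exactly; or, as the paper does, set \emph{all} columns equal to $x$, so that the softmax weights become uniform and $\varphi_W(X)_1 = Vx$ on the nose. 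Either variant yields $V = V'$ directly, and your implication $\varphi_W = 0 \Rightarrow V = 0$ survives unchanged since the prefactor above is always positive.

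Once $V$ is recovered, your second step is close in spirit to the paper's, but the paper's choice of probes avoids the $d'=1$ and $t=2$ bookkeeping you anticipate: it first specializes to $x_j = 0$ for $j>1$ (now reading position $1$) to match the quadratic forms of $A$ and $A'$ via injectivity of $\mathcal{S}$, and then sets $x_2 = \cdots = x_t$ to obtain, after cancellation, an identity of the form $\big(\mathcal{S}(x_1^\transp A' x_2) - \mathcal{S}(x_1^\transp A x_2)\big)\cdot(\text{nonzero vector}) = 0$, forcing $x_1^\transp A x_2 = x_1^\transp A' x_2$ for all $x_1,x_2$ and hence $A = A'$. This works uniformly for all $t \geq 2$ and all $d'$, using only $V \neq 0$, so it sidesteps the linear-independence and scaling-parameter manoeuvres in your outline.
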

\begin{proof}
See Appendix \ref{appendix:classical}.
\end{proof}
Therefore, differently from lightning self-attention, in this case, the parametrization is generically one-to-one.  

We now consider the deep case. Note that even with normalization, deep attention networks can be reparametrized via $\mathbf{M}$ and $L$, as defined in Section \ref{sec:deep}. Therefore, the fibers of the parametrization will be unaffected by the symmetries inside the attention matrices from Lemma \ref{lemm:matfact} and the transformations $C_i$ from Equation \ref{eq:symmdeep}. However, this time no rescaling is possible. Therefore, we conjecture that the parametrization via $(\mathbf{M}, L)$ is generically one-to-one;
in other words, that normalization only breaks the layer-wise scaling symmetry of the parametrization.
\begin{conjec}\label{conj:deepsmax}
For normalized deep self-attention networks, the generic fibers of the parametrization via $(\mathbf{M}, L)$ are singletons. 
\end{conjec}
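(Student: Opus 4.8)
\textbf{Proof proposal for Conjecture \ref{conj:deepsmax}.}

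The plan is to mimic the structure of the deep lightning argument (Theorem \ref{thm:mainfiber}) but exploit the fact that the softmax normalization destroys homogeneity, and hence all rescaling freedom. Concretely, suppose $(\mathbf{M},L)$ and $(\mathbf{M}',L')$ induce the same normalized deep network. The first step is to pass to the last layer: by Lemma \ref{lemm:induct}-style unrolling (adapted to include the normalization factors $\zeta_i$), one isolates the outermost layer $\varphi_{(A_l,V_l)}$ applied to the output of the first $l-1$ layers, and then applies the single-layer rigidity of Theorem \ref{thm:classical_fiberattn} fibrewise. The key point is that the image of a generic deep subnetwork $\varphi_{(\mathbf{M}_{<l},L_{<l})}$ on $\mathbb{R}^{d_0\times t}$ is full-dimensional enough (contains, say, $t$ linearly independent vectors in enough configurations) that matching the two normalized layers on that image forces $M_l = M_l'$ and $L_l L_{l-1}^{-1}$-type data to agree; I would phrase this as: the identity $\varphi^{\mathcal S}_{(A_l,V_l)}\circ \psi = \varphi^{\mathcal S}_{(A_l',V_l')}\circ \psi$ for a generic polynomial map $\psi$ with dense-enough image implies $\varphi^{\mathcal S}_{(A_l,V_l)} = \varphi^{\mathcal S}_{(A_l',V_l')}$ on an open set, whence $M_l=M_l'$ and $V_l=V_l'$ up to the $C_{l-1}$ ambiguity already quotiented out by using virtual weights. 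Then one peels off layers by downward induction on $l$.

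An alternative, and perhaps cleaner, route is to linearize in $\mathcal S$: since $\mathcal S$ is injective with $\mathcal S(0)=1$, one can extract the bilinear forms $x_i^\transp M_j x_k$ (and the compositions recorded by $\widetilde M_j$ in Equation \ref{eq:triadic}) by looking at the Taylor expansion of $\varphi^{\mathcal S}_{\mathbf W}(X)$ around a configuration $X$ where all tokens coincide, $x_i = x$ for all $i$; there the normalizers $\zeta_i$ all equal $t\,\mathcal S(x^\transp A x)$ and the leading nontrivial variation in the directions $x_i \mapsto x_i + \varepsilon y_i$ reproduces, up to the known normalization constant, exactly the lightning-type polynomial data. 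Since Theorem \ref{thm:mainfiber} already shows that data determines $(\mathbf M, L)$ up to rescaling, and the constant prefactors coming from $\mathcal S$ fix the rescaling freedom (there is no free $\lambda_i$ left once the additive normalization $\mathcal S(0)=1$ is pinned down), one concludes the fiber is a singleton. This reduces the conjecture to: (i) the perturbative expansion around the diagonal recovers the triadic polynomials of Equation \ref{eq:triadic}, and (ii) the normalization constants remove the one-dimensional scaling ambiguity layer by layer.

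The main obstacle I expect is step (i) in either route: controlling the interaction between the layer-wise normalizers and the composition. In the lightning case the network is a genuine polynomial and unique factorization does the heavy lifting; with softmax the intermediate outputs are ratios of exponentials of polynomials, so one must argue that, generically in $\mathbf W$, no ``accidental'' cancellation between the $\zeta_i$'s of the two candidate parameter choices can occur — equivalently, that the normalizers themselves are determined. I would handle this by noting that along the diagonal perturbation the normalizer of layer $i$ is, to the relevant order, a scalar function of $x$ alone (not of the perturbation directions $y_i$), so it factors out of the multilinear data and cannot be traded against the numerator; making this precise, and checking that the genericity conditions of Theorem \ref{thm:mainfiber} (ranks $\geq 2$, non-skew-symmetry) survive the reduction, is the crux. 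The numerical experiments referenced in the text would serve as a consistency check that no further hidden symmetry has been overlooked.
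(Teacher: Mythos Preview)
The statement you are attempting to prove is explicitly a \emph{conjecture} in the paper, not a theorem: the authors do not offer a proof. They prove the single-layer normalized case (Theorem~\ref{thm:classical_fiberattn}) and then state Conjecture~\ref{conj:deepsmax} for the deep case, supporting it only with numerical experiments that compare the rank of the Jacobian of the parametrization against the predicted dimension formula (Equation~\ref{eq:conjform}). There is therefore no paper proof to compare your proposal against.

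As for the proposal itself: both routes you outline are reasonable research directions, and you have correctly flagged the main obstruction in each. In the layer-peeling approach, the gap is that Theorem~\ref{thm:classical_fiberattn} assumes equality of the two single-layer maps on \emph{all} of $\mathbb{R}^{d\times t}$, whereas after composition you only have equality on the image of the preceding subnetwork; you would need that image to be rich enough (say, to contain an open set) for generic weights, which is not established and is nontrivial since the intermediate maps are non-algebraic. In the Taylor-expansion approach, the observation that evaluation on the diagonal $x_i=x$ collapses each normalized layer to $x\mapsto Vx$, and hence recovers $L$, is correct and clean; but extracting the full triadic data of Equation~\ref{eq:triadic} from higher-order perturbations through the composed normalizers is precisely the hard step, and it is not carried out here. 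In short, what you have written is a plausible plan of attack on an open problem, not a proof, and the paper itself goes no further than conjecture plus numerical evidence.
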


In particular, suppose that for some $\delta$ it holds that $d_i = \delta$ for all $0 < i < l$ and $d_0, d_l \geq \delta$. Similarly to Corollary \ref{cor:dimneur}, the above conjecture implies that the dimension of the neuromanifold equals to:  
\begin{equation}\label{eq:conjform}
2 \alpha_1 d_0 - \alpha_1^2   + \delta ( d_l + d_0) - \delta^2  + \sum_{1 < i \leq  l}( 2\alpha_i\delta - \alpha_i^2),
\end{equation}
which coincides with the dimension in the lightning case (Equation \ref{eq:lightdim}), summed with the number of layers $l$ due to the removal of scaling symmetries. This is an inconsequential difference for large models, where $l$ is significantly smaller than the number of parameters per layer.

\begin{figure}[!h]
    \centering
    \includegraphics[width=.45\linewidth]{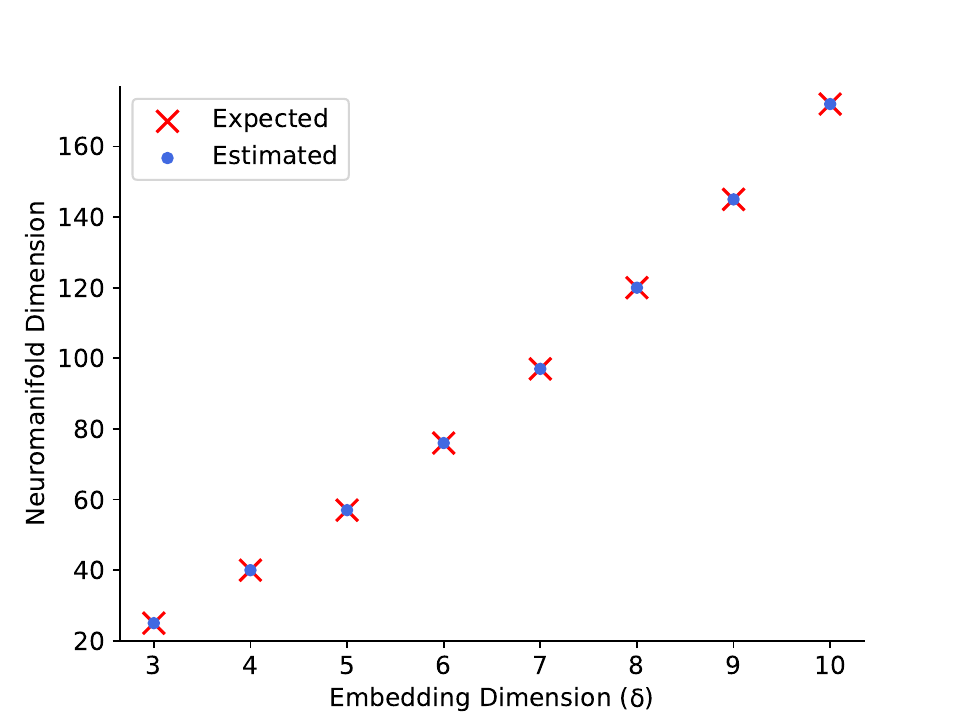}
    \caption{Plot of the estimated and expected dimensions of the neuromanifold as $\delta$ varies.}
    \label{fig:dimexp}
\end{figure}

We provide empirical evidence for Conjecture \ref{conj:deepsmax}. To this end, we implement a deep attention network with softmax normalization (i.e., $\mathcal{S}(x) = e^x$), and estimate the dimension of its neuromanifold. The latter is a subtle problem since, differently from the lightning case, the neuromanifold is not a priori embedded in a finite-dimensional vector space. Therefore, we rely on a stochastic finite element approach by randomly sampling $N=250$ input points in $\mathbb{R}^{d_0 \times t}$ from a normal distribution and restricting $\varphi_{\mathbf{W}}$ to this finite space. As a result, the neuromanifold is embedded in a $(N \times t \times d_l)$-dimensional vector space. Its dimension can then be computed as the rank of the Jacobian of the parametrization at a random parameter $\mathbf{W}$. This provides the correct result with probability $1$ w.r.t. to the sampling of $\mathbf{W}$, with the only possible error coming from the discretization, which can be corrected for by increasing the number of input samples. Our \texttt{Python} code is available at a public  repository\footnote{\url{https://github.com/giovanni-marchetti/NeuroDim}}. The results are visualized in Figure \ref{fig:dimexp} for a deep attention network with $l=2$ layers, $t = 3$, $a_i=2$ for all $i$, and $d_i=\delta$ varying from $3$ to $10$. The plot shows both the dimension estimated via the numerical approach (`Estimated') and the one computed via Equation \ref{eq:conjform} (`Expected'). The two values coincide for all $\delta$, confirming Conjecture \ref{conj:deepsmax} empirically.

\section{Conclusions and Future Work}
In this work, we have analyzed the geometry of neuromanifolds of lightning self-attention networks. In particular, we have provided a description of the fibers of the parametrization, and consequently computed the dimension of the neuromanifold for an arbitrary number of layers. Finally, we have formulated an analogous conjecture for traditional self-attention networks. 

Our work represents a first step towards the mathematical understanding of neuromanifolds defined by attention networks. As such, it is subject to limitations and leaves several questions open. Specifically, the attention networks we consider are a simplified version of the ones deployed in practice, since we omit popular architectural variations \citep{brauwers2021general}. Two such variations, which are ubiquitous in contemporary Transformers, are skip connections and multiple heads. With both these additions, the lightning self-attention mechanism is still polynomial. Note that skip connections make the model non-homogeneous, which breaks the scaling symmetry in the parameterization. We believe that in this case the parameterization via $(\textbf{M}, L)$ is generically one-to-one, similarly to the traditional case (Conjecture \ref{conj:deepsmax}), which is also non-homogeneous. In contrast, including multiple attention heads introduces new symmetries due to permutation of heads, similarly to the permutation symmetries of traditional Multi-Layer Perceptrons \citep{kileel2019expressive}. Therefore, these two variations give rise to interesting phenomena in terms of symmetries of the parameterization, and define future directions that are worthy of exploration.

From a wider perspective, the research program of applying the tools offered by algebraic geometry to the field of deep learning remains open and worthy of exploration. Even further, going beyond the polynomial setting -- e.g., addressing problems such as Conjecture \ref{conj:deepsmax} -- is an even more general challenge that lies at the foundations of the theoretical understanding of deep learning. 

\section*{Acknowledgements}
This work was partially supported by the Wallenberg AI, Autonomous Systems and Software Program (WASP) funded by the Knut and Alice Wallenberg Foundation.

\bibliography{main}
\bibliographystyle{iclr2025_conference}

\newpage 

\appendix
\section{Proofs of Theoretical Results}
In this appendix, we include the proofs of the theoretical results in the main body of the paper. 

\subsection{Proof of Lemma \ref{lemm:matfact}}\label{appendix:matfact}
\begin{proof}
Since $a \geq \textnormal{rk}(Q) \geq \textnormal{rk}(A) = a$, we deduce $\textnormal{rk}(K) = a$, and similarly for $Q, K', Q'$. Therefore, without loss of generality, we can assume that $K = K_1 \oplus K_2$, where $K_1\in \textnormal{GL}_a(\mathbb{R})$ and $K_2 \in \mathbb{R}^{a \times (d-a)}$, and similarly for $K', Q, Q'$. If follows immediately that exists a unique $C \in \textnormal{GL}_a(\mathbb{R})$ such that $K'_1 = C K_1$. Since by hypothesis $K_1^\transp Q_1  = K_1'^\transp Q_1'$, we deduce $Q_1 = C^\transp Q_1'$. But then $K_2^\transp Q_1 = K_2'^\transp Q_1' =K_2'^\transp C^{-\transp} Q_1$, implying that $K_2^\transp = K_2'^\transp C^{-\transp}$ and, similarly, $Q_2= C^{\transp}Q_2'$, as desired. 
\end{proof}

\subsection{Proof of Theorem \ref{thm:fiberattn}}\label{appendix:fiberattn}
\begin{proof}
Suppose that $\varphi_W(X) = \varphi_{W'}(X)$ for all $X = (x_i)_{1 \leq i \leq t} \in \mathbb{R}^{d \times t}$, where $W' = (A', V')$. By comparing the terms containing only $x_1$ in the polynomial identity $\varphi_W(X)_1 = \varphi_{W'}(X)_1$, we obtain: 
\begin{equation}\label{eq:onlyx1}
    V x_1 \ x_1^\transp A x_1  = V' x_1 \ x_1^\transp A' x_1. 
\end{equation}
The above equation is a product of a quadratic form and a linear multivariate form. If $\textnormal{rk}(A) > 1$, the quadratic form is irreducible, and from the unique factorization property of polynomials it follows that, up to multiplicative scalars, $V$ coincides with $V'$, and the quadratic form associated to $A$ coincides with the one associated to $A'$. The same holds if $\textnormal{rk}(V) > 1$, since the sides of the above equation represent cubical polynomials that share a quadratic factor, but whose remaining linear factors are independent. In order to prove that $A$ and $A'$ coincide (up to scaling), consider the terms linear in $x_1$ and quadratic in $x_2$ in the identity $\varphi_W(X)_1 = \varphi_{W'}(X)_1$. This leads to:
\begin{equation}
   \bcancel{V x_2} \ x_2^\transp A x_1  = 
   \bcancel{V' x_2} \ x_2^\transp A' x_1. 
\end{equation}
It follows that the bilinear forms associated to $A$ and $A'$ coincide (up to scaling), as desired for the third case of the claim.

If instead $\textnormal{rk}(A) = \textnormal{rk}(V) = 1$, then it is possible to factor $A = k \otimes q$ and $V = h \otimes v$ for some vectors $q,k,v \in \mathbb{R}^d \setminus \{0 \}$, $h \in \mathbb{R}^{d'} \setminus \{0 \}$. Exchanging the role of $k$ and $v$ does not alter Equation \ref{eq:attnformula}, which provides the first case of the claim. Finally, since the ring of polynomials is an integral domain, if $\varphi_W(X) = 0$ then from Equation \ref{eq:onlyx1} it follows that either $A = 0$ or $V = 0$, as desired for the second case of the claim. 
\end{proof}

\subsection{Proof of Theorem \ref{thm:geometry}}\label{appendix:geometry}

To prove Theorem~\ref{thm:geometry}, we begin with the following key insight:
a consequence of Theorem \ref{thm:fiberattn} is that it is possible to \emph{projectify} the neuromanifold and its parametrization (with respect to the attention matrix). To this end, denote by $\mathbb{P} \mathcal{V} = \mathcal{V} / (\mathbb{R} \setminus \{ 0\})$ the projectification of a vector space~$\mathcal{V}$. The second condition of Theorem~\ref{thm:fiberattn} implies that the parametrization  descends to a well-defined morphism: 
\begin{equation}
\label{eq:projParam}    \overline{\varphi}\colon \mathbb{P}\mathbb{R}^{d \times d} \times \mathbb{P}\mathbb{R}^{d \times d'} \rightarrow \mathbb{P} \symmm{3}{\mathbb{R}^{d \times t}}{\mathbb{R}^{d' \times t}}. 
\end{equation}

We denote by $\mathbb{P}\mathcal{M}_{d,d',a}$ the image of the map $\overline{\varphi}$ -- referred to as \emph{projective neuromanifold}. 
The fact that $\overline{\varphi}$ is well-defined implies the following topological result.

\begin{cor} \label{cor:closed}
The neuromanifold $\mathcal{M}_{d,d',a}$ is closed in the Euclidean topology.
\end{cor}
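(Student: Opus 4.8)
The plan is to deduce Euclidean closedness of $\mathcal{M}_{d,d',a}$ from compactness of real projective space, applied to the projectified parametrization $\overline{\varphi}$. First I would identify the correct source of $\overline{\varphi}$ when the attention matrix is constrained to rank $\leq a$: it is $\mathbb{P}\mathcal{D}_a \times \mathbb{P}\mathbb{R}^{d \times d'}$, where $\mathcal{D}_a \subseteq \mathbb{R}^{d\times d}$ denotes the determinantal variety of matrices of rank $\leq a$. Since $\mathcal{D}_a$ is a Zariski-closed cone, cut out by the vanishing of all $(a+1) \times (a+1)$ minors, its projectivization $\mathbb{P}\mathcal{D}_a$ is a closed subvariety of $\mathbb{P}\mathbb{R}^{d \times d}$; hence $\mathbb{P}\mathcal{D}_a \times \mathbb{P}\mathbb{R}^{d\times d'}$ is a closed subset of a product of real projective spaces, and in particular is compact in the Euclidean topology. (If $a \geq d$ this is just $\mathbb{P}\mathbb{R}^{d\times d}\times\mathbb{P}\mathbb{R}^{d\times d'}$, still compact.)

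Next I would invoke the fact, established in the discussion preceding the statement, that $\overline{\varphi}$ is a genuine morphism: it is induced by the bilinear (hence bidegree-$(1,1)$) map $(A,V) \mapsto VXX^\transp AX$, and it has no base points precisely because the second bullet of Theorem \ref{thm:fiberattn} shows that $\varphi_{(A,V)}$ vanishes identically only when $A = 0$ or $V = 0$, neither of which occurs in the product of projective spaces. In particular $\overline{\varphi}$ is continuous for the Euclidean topologies, so the projective neuromanifold $\mathbb{P}\mathcal{M}_{d,d',a} = \overline{\varphi}\bigl(\mathbb{P}\mathcal{D}_a \times \mathbb{P}\mathbb{R}^{d\times d'}\bigr)$ is the continuous image of a compact set inside the Hausdorff space $\mathbb{P}\symmm{3}{\mathbb{R}^{d\times t}}{\mathbb{R}^{d'\times t}}$, hence compact, hence closed.

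Finally I would pass from this projective statement back to the affine one. The neuromanifold is a cone: $\varphi_{(\lambda A, V)} = \lambda\,\varphi_{(A,V)}$ shows it is stable under scaling by $\mathbb{R}$, it contains $0 = \varphi_{(0,V)}$, and writing $q$ for the quotient $\symmm{3}{\mathbb{R}^{d\times t}}{\mathbb{R}^{d'\times t}} \setminus \{0\} \to \mathbb{P}\symmm{3}{\mathbb{R}^{d\times t}}{\mathbb{R}^{d'\times t}}$ one checks $q^{-1}(\mathbb{P}\mathcal{M}_{d,d',a}) = \mathcal{M}_{d,d',a}\setminus\{0\}$, again using $\mu\,\varphi_{(A,V)} = \varphi_{(\mu A,V)}$ and that a rescaling of $A$ does not change its rank. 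Since $q$ is continuous, $\mathcal{M}_{d,d',a}\setminus\{0\} = Z \cap \bigl(\symmm{3}{\mathbb{R}^{d\times t}}{\mathbb{R}^{d'\times t}}\setminus\{0\}\bigr)$ for some Euclidean-closed $Z$, and then $\mathcal{M}_{d,d',a} = (Z\setminus\{0\})\cup\{0\}$, which equals $Z\cup\{0\}$ or $Z$ depending on whether $0 \in Z$; either way it is closed. The only delicate point is this last bookkeeping at the origin, together with the conceptual reliance on Theorem \ref{thm:fiberattn} to know that $\overline{\varphi}$ is defined on the \emph{entire} product of projective spaces rather than merely on a dense open subset; everything else is a direct appeal to compactness of real projective space.
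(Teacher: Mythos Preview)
Your proof is correct and follows essentially the same route as the paper: projectify the parametrization, use that a continuous map from a compact space to a Hausdorff space has closed image to conclude $\mathbb{P}\mathcal{M}_{d,d',a}$ is closed, and then pass to the affine cone. You supply extra care the paper leaves implicit---restricting the domain to the projectivized determinantal variety $\mathbb{P}\mathcal{D}_a$ when $a<d$, and the bookkeeping at the origin---but these are refinements of the same argument, not a different one.
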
 

\begin{proof}
A continuous map between a compact and a Hausdorff space is closed, and in particular has a closed image. These properties are satisfied by $\overline{\varphi}$, where both the domain and the codomain are equipped with the Euclidean topology over projective spaces. It follows that $\mathbb{P}\mathcal{M}_{d,d',a}$ is Euclidean closed in $\mathbb{P}\symmm{3}{\mathbb{R}^{d \times t}}{\mathbb{R}^{d' \times t}}$. Since the parametrization is homogeneous, the neuromanifold coincides with the affine cone of its projectification. Since affine cones of closed subspaces are closed, the claim follows. 
\end{proof}

Next, we characterize the lightning self-attention mechanisms that are singular in the neuromanifold, i.e., whose tangent space has a dimension greater than Equation \ref{eq:neurodim}.
For that, we combine Theorem \ref{thm:fiberattn} with a computation of the critical points of the network parametrization map.
Firstly, we remark that the last point in Theorem \ref{thm:fiberattn} implies that  $\overline{\varphi}$ is generically one-to-one onto its image $\mathbb{P}\mathcal{M}_{d,d',a}$, i.e., it is \emph{birational}. 

\begin{cor}\label{cor:finiteBirational}
The map $\overline{\varphi}$ is birational onto its image $\mathbb{P}\mathcal{M}_{d,d',a}$, with  special fibers of cardinality~$2$. 
\end{cor}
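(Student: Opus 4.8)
The statement to prove is Corollary~\ref{cor:finiteBirational}: the projective parametrization $\overline{\varphi}$ is birational onto $\mathbb{P}\mathcal{M}_{d,d',a}$, with special fibers of cardinality $2$.

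\bigskip

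The plan is to extract this almost verbatim from Theorem~\ref{thm:fiberattn}, which completely classifies the fibers of the affine parametrization up to rescaling. First I would recall that the rescaling ambiguity $(A,V) \mapsto (\lambda A, \tfrac{1}{\lambda}V)$ is precisely what gets quotiented out when passing to the projective map $\overline{\varphi}$ on $\mathbb{P}\mathbb{R}^{d\times d} \times \mathbb{P}\mathbb{R}^{d\times d'}$; this is already observed in the discussion leading to Equation~\ref{eq:projParam}, and is why $\overline{\varphi}$ is well-defined (using that the $\varphi_W = 0$ locus is negligible, i.e.\ does not meet a Zariski-dense open set). So a fiber of $\overline{\varphi}$ over a point $[\varphi_W]$ is exactly the fiber of the affine map modulo rescaling, which Theorem~\ref{thm:fiberattn} describes.

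Next I would invoke the genericity remarks immediately following Theorem~\ref{thm:fiberattn}: under the standing hypotheses ($t \geq 2$ and either $d,d' \geq 2$ or $d,a \geq 2$), the second bullet of the theorem (the $\varphi_W = 0$ case) is negligible, and the first bullet (the $\mathrm{rk}(A) = \mathrm{rk}(V) = 1$ case) is also negligible. Hence there is a Zariski-dense open subset $U$ of the parameter space on which only the third bullet applies, and there the fiber consists solely of rescalings of $W$ — i.e.\ a single point of $\mathbb{P}\mathbb{R}^{d\times d} \times \mathbb{P}\mathbb{R}^{d\times d'}$. This shows $\overline{\varphi}$ is generically injective. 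Combined with the fact that $\overline{\varphi}$ is a morphism between (projective) varieties that is dominant onto its image $\mathbb{P}\mathcal{M}_{d,d',a}$, generic injectivity is exactly the definition of birationality onto the image.

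Finally, for the claim about special fibers of cardinality $2$: on the locus where $\mathrm{rk}(A) = \mathrm{rk}(V) = 1$, Theorem~\ref{thm:fiberattn} says the fiber consists, up to rescaling, of $W = (k\otimes q, h\otimes v)$ and $W' = (v\otimes q, h\otimes k)$. In projective coordinates these are two points (and they genuinely differ for generic such $W$, since $k$ and $v$ are generically linearly independent), so the fiber has cardinality $2$ there. I expect the only subtlety to be bookkeeping: confirming that the ``swap'' map $W \mapsto W'$ is well-defined projectively and that the two points coincide only on a lower-dimensional sublocus — a short argument using that the rank-one tensor decompositions $A = k\otimes q$, $V = h\otimes v$ are unique up to the scalar ambiguity already absorbed by projectivization. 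No step here is a real obstacle; the entire content has been front-loaded into Theorem~\ref{thm:fiberattn}, and this corollary is purely a repackaging of it in projective language.
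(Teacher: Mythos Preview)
Your proposal is correct and follows exactly the same approach as the paper: the paper's proof is the single sentence ``This follows from the first and the last point of Theorem~\ref{thm:fiberattn},'' and your write-up is just a careful unpacking of that sentence. The only minor refinement is that well-definedness of $\overline{\varphi}$ comes not from negligibility of the $\varphi_W=0$ locus but from the stronger fact (second bullet of Theorem~\ref{thm:fiberattn}) that $\varphi_W=0$ forces $A=0$ or $V=0$, which are already excluded from the domain $\mathbb{P}\mathbb{R}^{d\times d}\times\mathbb{P}\mathbb{R}^{d\times d'}$.
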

\begin{proof}
    This follows from the first and the last point of Theorem \ref{thm:fiberattn}.
\end{proof}
Next, we compute the critical points of the parametrization. To this end, recall that a point is critical for a differentiable map if the differential at that point does not have maximal rank. 
\begin{lemm}\label{lem:cripts}
A point $(A,V) = W$ is critical for the parametrization map $W \mapsto \varphi_W$ if and only if $A = k \otimes q$ and $V = h \otimes k$ for some $q, k  \in \mathbb{R}^{d}$, $h \in \mathbb{R}^{d'}$.  
\end{lemm}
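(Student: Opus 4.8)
The statement to prove is Lemma~\ref{lem:cripts}: the critical points of $W \mapsto \varphi_W$ are exactly those $(A,V)$ with $A = k\otimes q$ and $V = h \otimes k$. The plan is to compute the differential $d\varphi_W$ at a point $W=(A,V)$ explicitly and determine when it fails to have maximal rank. Since $\varphi_W(X) = VXX^\transp A^\transp X$ (writing $A$ for the attention matrix, so $\varphi_W(X)=VXX^\transp AX$ in the notation of the excerpt), and $\varphi$ is bilinear in $(A,V)$ jointly (linear in each), the differential at $(A,V)$ in the direction $(\dot A, \dot V)$ is simply
\begin{equation}
    d\varphi_{(A,V)}(\dot A, \dot V)\colon X \mapsto \dot V X X^\transp A X + V X X^\transp \dot A X.
\end{equation}
A point is critical precisely when this linear map in $(\dot A, \dot V)$ has a kernel of dimension strictly larger than the generic one. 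By Corollary~\ref{cor:finiteBirational} the generic fiber of $\overline\varphi$ is finite, so generically the kernel of $d\varphi$ is exactly one-dimensional (spanned by the rescaling direction $(\dot A,\dot V)=(A,-V)$, which always lies in the kernel since $\varphi_{(\lambda A,\lambda^{-1}V)}=\varphi_{(A,V)}$). Hence $W$ is critical if and only if there is a kernel direction $(\dot A,\dot V)$ not proportional to $(A,-V)$, i.e.\ the polynomial identity $\dot V X X^\transp A X + V X X^\transp \dot A X = 0$ for all $X$ has a solution transverse to $(A,-V)$.

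The core is therefore to solve this polynomial identity. I would analyze it the same way Theorem~\ref{thm:fiberattn} was proved: substitute structured $X$ and compare monomials. Setting all columns of $X$ equal to a single vector $x$ (up to the token index), the $x_1$-only term gives $\dot V x\, (x^\transp A x) + V x\, (x^\transp \dot A x) = 0$ as an identity in $x\in\mathbb R^d$. This is a sum of two cubic vector-valued forms, each a product of a linear form (a column of $\dot V$ resp.\ $V$) and a quadratic form (associated to $A$ resp.\ $\dot A$). By unique factorization, if $\mathrm{rk}(A)\ge 2$ then $x^\transp A x$ is irreducible, and one shows $x^\transp Ax$ must divide $V x\,(x^\transp \dot A x)$, forcing either $\dot A$ proportional to $A$ (and then $\dot V$ proportional to $-V$, the trivial direction) or a contradiction; a symmetric argument handles $\mathrm{rk}(V)\ge 2$. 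This will show no nontrivial kernel element exists unless $\mathrm{rk}(A)=\mathrm{rk}(V)=1$. So write $A = k\otimes q$, $V = h\otimes v$; substituting and matching the $x_j$-linear, $x_i$-quadratic terms (as in the proof of Theorem~\ref{thm:fiberattn}) pins down when the linearized equation admits an extra solution, and the computation should reduce to the condition that the ``swap'' direction (interchanging the roles of $k$ and $v$) becomes tangent — which happens exactly when $v = k$ (up to scale), i.e.\ $V = h\otimes k$. Conversely, when $A=k\otimes q$ and $V=h\otimes k$ one exhibits an explicit extra kernel vector of $d\varphi$ (the infinitesimal version of the $k\leftrightarrow v$ swap from Theorem~\ref{thm:fiberattn}), confirming criticality.

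\textbf{Main obstacle.} The delicate part is the rank-one case: there I must carefully separate the ``generic'' kernel direction coming from rescaling from the extra one coming from the discrete swap symmetry, and show the swap direction is infinitesimally realized (hence a genuine tangent vector to the fiber) \emph{only} when $v\parallel k$. This requires a careful monomial bookkeeping in the polynomial identity $\dot V XX^\transp AX + VXX^\transp\dot AX=0$ — in particular tracking which coefficients force $\dot A$ to be a combination of $k\otimes q$, $q\otimes k$, and lower-dimensional corrections — rather than any conceptual difficulty. The surjectivity/irreducibility inputs (unique factorization of the cubic forms, finiteness of generic fibers from Corollary~\ref{cor:finiteBirational}) are exactly the tools already used for Theorem~\ref{thm:fiberattn}, so the argument parallels that proof closely; I would present it by reusing that monomial-comparison machinery and only highlighting the new linearized computation.
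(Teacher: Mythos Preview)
Your proposal is correct and follows essentially the same route as the paper. The one difference is in packaging: rather than redoing the monomial/unique-factorization argument on the kernel equation, the paper observes that $\dot V X X^\transp A X + V X X^\transp \dot A X = 0$ for all $X$ is literally the fiber condition $\varphi_{(A,\dot V)} = \varphi_{(-\dot A, V)}$, and then applies Theorem~\ref{thm:fiberattn} directly---this immediately gives the rescaling-only kernel when $\textnormal{rk}(A)\,\textnormal{rk}(V)\geq 2$, and in the rank-one case $A=k\otimes q,\ V=h\otimes v$ forces $v\parallel k$ via the swap alternative. The explicit extra kernel directions the paper exhibits when $v=k$ are $(\dot A,\dot V)=(p\otimes q,\,-h\otimes p)$ for arbitrary $p\in\mathbb{R}^d$, which is precisely your ``infinitesimal swap''. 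Your plan unfolds the same argument by hand; it works, but the shortcut via Theorem~\ref{thm:fiberattn} is more economical and spares you the separate bookkeeping for the skew-symmetric part of $\dot A$.
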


\begin{proof}
Given $W = (A, V)$, by computing the partial derivatives of $\varphi_\bullet$, we see that the differential sends a tangent vector $(\dot{A}, \dot{V})$ to $\dot{V}XX^\transp A X  + V X X^\transp \dot{A} X$. The latter is interpreted as an element of the vector space $\symmm{3}{\mathbb{R}^{d \times t}}{\mathbb{R}^{d' \times t}}$, which is identified with the tangent spaces of all of its elements. Since the dimension of the neuromanifold is one less than the dimension of the parameter space, the  point $W$ is critical if and only if the kernel of the differential has dimension $>1$. The kernel consists of those $(\dot{A}, \dot{V})$ that, for all $X$, satisfy
\begin{equation}\label{eq:kerndiff}
\dot{V}XX^\transp A X = - V X X^\transp \dot{A} X.
\end{equation}
If $V = 0$, the differential vanishes whenever $\dot{V} = 0$; and similarly for $A$. This provides the first case of the claim. 
If $\textnormal{rk}(A) \textnormal{rk}(V) \geq 2$,  Theorem \ref{thm:fiberattn} implies that the kernel consists of $(\dot{V} = - \lambda V, \dot{A} = \lambda A)$ for $\lambda \in \mathbb{R}$, and $W$ is therefore not critical.
Finally, suppose that $\textnormal{rk}(A) = \textnormal{rk}(V) = 1$, and write $A = k \otimes q$ and $V = h \otimes v$ for some $q, k, v\in \mathbb{R}^{d}, h \in \mathbb{R}^{d'}$. 
In order to obtain a larger kernel, by Theorem \ref{thm:fiberattn}, it is necessary that $v  = k$ (up to scaling). In that case, for every $\dot{A}$ of the form $\dot{A} = p \otimes q$, we have that $\dot{V} = -h \otimes p$ is a solution to Equation \ref{eq:kerndiff},  implying that $W = (A, V)$ is critical.
\end{proof}

By exploiting the fact that $\overline{\varphi}$ is birational with finite fibers, the above result leads us to the characterization of the singular points of $\mathcal{M}_{d,d',a}$.
\begin{cor}\label{cor:sing}
$\varphi_{(A,V)}$ is singular in the neuromanifold if and only if $\textnormal{rk}(A) \textnormal{rk}(V) \leq 1$.
\end{cor}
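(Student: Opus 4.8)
The plan is to recognize the projective parametrization $\overline{\varphi}$ of Equation~\ref{eq:projParam} as the \emph{normalization} of the projective neuromanifold $\mathbb{P}\mathcal{M}_{d,d',a}$, and then read the singular locus off from the fibre description of Theorem~\ref{thm:fiberattn} together with the critical‑point description of Lemma~\ref{lem:cripts}. Indeed, $\overline{\varphi}$ is a morphism from the smooth --- hence normal --- projective variety $\mathbb{P}\mathbb{R}^{d\times d}\times\mathbb{P}\mathbb{R}^{d\times d'}$; it is proper because this domain is compact and $\overline{\varphi}$ is everywhere defined (by the second case of Theorem~\ref{thm:fiberattn}, $\varphi_{(A,V)}\neq 0$ whenever $A,V\neq 0$); and it is birational onto its image with finite fibres by Corollary~\ref{cor:finiteBirational}. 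A finite birational morphism from a normal variety is the normalization, which identifies $\overline{\varphi}$. Since $\mathcal{M}_{d,d',a}$ is the affine cone over $\mathbb{P}\mathcal{M}_{d,d',a}$, it suffices to determine the singular locus of the latter; I present the case where $A$ is unconstrained, the case $a<d$ being analogous but requiring one to additionally track the determinantal constraint on $A$.

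First I would show that $\mathrm{rk}(A)\,\mathrm{rk}(V)\geq 2$ forces $\varphi_{(A,V)}$ to be smooth. In that case $(A,V)$ is not of the form in Lemma~\ref{lem:cripts}, so it is a non‑critical point of the parametrization and $\overline{\varphi}$ is unramified at $[(A,V)]$; moreover, by the third case of Theorem~\ref{thm:fiberattn}, this is the \emph{only} point of $\overline{\varphi}^{-1}([\varphi_{(A,V)}])$. Consequently the maximal ideal of the local ring of $\mathbb{P}\mathcal{M}_{d,d',a}$ at $[\varphi_{(A,V)}]$ generates the maximal ideal of the regular local ring of the domain at $[(A,V)]$, and since the latter is a finite module over the former, Nakayama's lemma forces the two local rings to coincide. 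Hence $[\varphi_{(A,V)}]$ is a smooth point, and $\mathrm{Sing}(\mathcal{M}_{d,d',a})\subseteq\{\varphi_{(A,V)}:\mathrm{rk}(A)\,\mathrm{rk}(V)\leq 1\}$.

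For the converse I would first treat the generic part of the locus $\mathrm{rk}(A)=\mathrm{rk}(V)=1$: write $A=k\otimes q$ and $V=h\otimes v$ with all four vectors nonzero and $v$ \emph{not} parallel to $k$, so $\varphi_{(A,V)}\neq 0$. By the first case of Theorem~\ref{thm:fiberattn}, the preimage $\overline{\varphi}^{-1}([\varphi_{(A,V)}])$ consists of the two \emph{distinct} points $[(k\otimes q,\,h\otimes v)]$ and $[(v\otimes q,\,h\otimes k)]$ --- distinct precisely because $v\not\parallel k$. A point over which the normalization has more than one preimage is multibranch, hence singular, so $\varphi_{(A,V)}$ is a singular point of $\mathcal{M}_{d,d',a}$. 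It then remains to pass to the whole locus $\{\varphi_{(A,V)}:\mathrm{rk}(A),\mathrm{rk}(V)\leq 1\}$: this set is the image of the irreducible space of quadruples $(k,q,h,v)$, the stratum just handled is dense in it, and $\mathrm{Sing}(\mathcal{M}_{d,d',a})$ is Zariski closed; hence the remaining points --- those with $v\parallel k$, and the zero function, which lies in the closure --- also lie in $\mathrm{Sing}(\mathcal{M}_{d,d',a})$. Together with the previous paragraph this gives $\mathrm{Sing}(\mathcal{M}_{d,d',a})=\{\varphi_{(A,V)}:\mathrm{rk}(A)\,\mathrm{rk}(V)\leq 1\}$.

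The step I expect to be most delicate is exactly the degenerate stratum $v\parallel k$, i.e.\ $A=k\otimes q$ and $V=h\otimes k$ --- the critical points of Lemma~\ref{lem:cripts} and the boundary points of Theorem~\ref{thm:geometry}. There the fibre of $\overline{\varphi}$ collapses to a single point which is moreover \emph{critical}, so neither the ``unramified $\Rightarrow$ smooth'' argument nor the ``two preimages $\Rightarrow$ singular'' argument applies directly, and a head‑on local computation would have to analyze how the multibranch singularity of the generic stratum degenerates further; the closure argument is what lets us bypass this. A secondary technical point is that identifying $\overline{\varphi}$ with a normalization --- which is what makes the preimage count control smoothness so cleanly --- crucially uses the properness of $\overline{\varphi}$, so the projectification (as in Corollary~\ref{cor:closed}) is essential here, not merely cosmetic.
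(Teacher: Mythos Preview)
Your argument is correct and runs on the same engine as the paper's: both exploit that $\overline{\varphi}$ is finite and birational from a smooth domain, so singularities of $\mathbb{P}\mathcal{M}_{d,d',a}$ are detected by where the map fails to be a local isomorphism. The paper compresses this into a single citation --- \cite[Lemma~3.2]{kohn2017secants}, which states that for a finite birational morphism the image point is singular iff the fibre has cardinality $\geq 2$ \emph{or} the source point is critical --- and then simply checks via Theorem~\ref{thm:fiberattn} and Lemma~\ref{lem:cripts} that $\mathrm{rk}(A)\,\mathrm{rk}(V)\leq 1$ is exactly the union of these two conditions. You instead unpack that lemma by hand: the Nakayama/local-ring step is the ``fibre $=1$ and unramified $\Rightarrow$ smooth'' half, and the multibranch step is the ``fibre $\geq 2$ $\Rightarrow$ singular'' half. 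The one genuine divergence is how you treat the critical stratum $A=k\otimes q$, $V=h\otimes k$: the cited lemma gives ``critical $\Rightarrow$ singular'' directly (essentially because normalization is an isomorphism over the smooth locus), whereas you reach those points by Zariski closure from the two-preimage stratum. Both routes are valid; yours trades a local-algebra fact for an irreducibility/closure argument, which is a reasonable swap and, as you note, neatly sidesteps the most delicate local computation.
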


\begin{proof}
The morphism $\overline{\varphi}$ is finite and birational by Corollary \ref{cor:finiteBirational}.
For such a map, a standard fact from algebraic geometry \citep[Lemma 3.2]{kohn2017secants} says that a point $\varphi_W$  is singular in the projective neuromanifold $\mathbb{P}\mathcal{M}_{d,d',a}$ if and only if either its fiber under $\overline{\varphi}$ has cardinality $\geq 2$ or $W$ is critical for $\overline{\varphi}$. By Lemma \ref{lem:cripts} and Theorem \ref{thm:fiberattn}, a point with $\textnormal{rk}(A) \textnormal{rk}(V) \leq 1$ is either critical or has a fiber of cardinality $2$, and $\varphi_{(A,V)}$ is therefore singular in $\mathbb{P}\mathcal{M}_{d,d',a}$. Since $\mathcal{M}_{d,d',a}$ is the affine cone of $\mathbb{P}\mathcal{M}_{d,d',a}$, a point is singular in the projective neuromanifold if and only if the corresponding line is singular in the neuromanifold, from which the claim follows. 
\end{proof}

Finally, we compute the boundary points of $\mathcal{M}_{d,d',a}$. 
We will see that they are precisely the critical values of the parametrization map. For that, we interpret the neuromanifold as a \emph{Segre variety on linear forms}, as follows. Given a vector space $\mathcal{V}$, we consider its dual space $\mathcal{V}^\ast$ and the outer product of vectors of linear forms:
\begin{align*}
    \tilde \sigma \colon (\mathcal{V}^\ast)^d \times (\mathcal{V}^\ast)^{d'} &\to \mathrm{Sym}_2(\mathcal{V})^{d \times d'}, \\
    (\alpha, \nu) &\mapsto ( \alpha_m \cdot \nu_n )_{1 \leq m \leq d, 1 \leq n \leq d'}.
\end{align*}
We can projectify this map analogously to $\overline{\varphi}$, obtaining birational morphism (if $d > 1$ or $d' > 1$), but not an embedding.
If $a<d$, we need to restrict the first factor of the map $\tilde{\sigma}$ to tuples of linear forms that span a subspace of rank at most $a$. We denote this space by $(\mathcal{V}^\ast)^d_{\leq a }$. 
\begin{prop}\label{prop:segre}
    The neuromanifold $\mathcal{M}_{d,d',a}$ is linearly isomorphic 
    to the image of $\tilde \sigma$ restricted to $(\mathcal{V}^\ast)^d_{\leq a } \times (\mathcal{V}^\ast)^{d'}$, where $\mathcal{V}= \mathbb{R}^d$.
\end{prop}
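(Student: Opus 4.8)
The goal is to exhibit a linear isomorphism between the codomain $\mathrm{Sym}_3(\mathbb{R}^{d\times t},\mathbb{R}^{d'\times t})$ (restricted to the span of the neuromanifold) and the space $\mathrm{Sym}_2(\mathcal V)^{d\times d'}$ containing the image of $\tilde\sigma$, and to check that it carries one parametrization onto the other. I would start by writing $\varphi_{(A,V)}(X)$ in coordinates: for the $n$-th output coordinate of the $i$-th token,
\begin{equation*}
\varphi_{(A,V)}(X)_{n,i} = \sum_{1\leq j\leq t} \bigl(x_j^\transp A\, x_i\bigr)\,\bigl(V x_j\bigr)_n = \sum_{1\le j\le t}\sum_{m,p,q} V_{n,m}\,A_{p,q}\,(x_j)_m (x_j)_p (x_i)_q .
\end{equation*}
The key observation is that this expression is \emph{linear} in the matrix of quadratic-times-linear data $(V_{n,m}A_{p,q})$, i.e.\ in the $d\times d'$ array whose $(q,n)$ entry (say) is the quadratic form $\sum_m V_{n,m}A_{\cdot,q}\cdot$ — more precisely, $\varphi_{(A,V)}$ depends on $(A,V)$ only through the collection of bilinear forms $B_{q,n} := (\text{row } q\text{ of }A)\otimes(\text{column }n\text{ of }V^\transp)$ viewed as elements of $\mathrm{Sym}_2(\mathbb{R}^d)$ (symmetrizing the two "$x_j$" slots, which is legitimate since only the symmetric part survives after summation over $j$). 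This is exactly the output $\tilde\sigma(\alpha,\nu)$ with $\mathcal V = \mathbb{R}^d$, $\alpha_q = $ the linear form $x\mapsto (Ax)_q$ indexed appropriately and $\nu_n = $ the linear form $x\mapsto (Vx)_n$; the rank-$\le a$ constraint on the first factor is precisely $\mathrm{rk}(A)\le a$.

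The steps in order: (1) Fix the bijection between $(A,V)$-data and pairs of tuples of linear forms $(\alpha,\nu)\in(\mathcal V^\ast)^d\times(\mathcal V^\ast)^{d'}$, $\alpha_q = A^\transp e_q$ (as a linear form via the standard pairing), $\nu_n = V^\transp e_n$, and note $\{\alpha_q\}$ spans a space of dimension $\mathrm{rk}(A)$, so the rank-$\le a$ restriction matches $(\mathcal V^\ast)^d_{\le a}$. (2) Define the linear map $\Phi$ from $\mathrm{Sym}_2(\mathcal V)^{d\times d'}$ into $\mathrm{Sym}_3(\mathbb{R}^{d\times t},\mathbb{R}^{d'\times t})$ that sends the array $(\sigma_{q,n})$ to the polynomial map $X\mapsto \bigl(\sum_j \sum_{q}\sigma_{q,n}(x_j,x_j)\,(x_i)_q\cdot\text{(appropriate bookkeeping)}\bigr)_{n,i}$ — essentially "plug $x_j$ into both slots of each quadratic form and multiply by the $i$-th token's $q$-th coordinate, then sum over $j$". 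Check this is well-defined and linear by inspection. (3) Verify the square commutes: $\Phi\circ\tilde\sigma = \varphi$ under the identification of (1), which is just the coordinate computation above read backwards. (4) Show $\Phi$ is injective \emph{on the image of $\tilde\sigma$} (or on the linear span thereof), so that it restricts to a linear isomorphism onto $\mathcal M_{d,d',a}$; equivalently, recover the array $(\sigma_{q,n})$ from $\varphi$ by the coefficient-extraction trick already used in the proof of Theorem~\ref{thm:fiberattn} — the coefficient of $(x_1)_a(x_1)_b(x_2)_q$ in $\varphi(X)_{n,2}$ reads off $\sigma_{q,n}$ up to a fixed combinatorial constant, using $t\ge 2$.

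\textbf{Main obstacle.} The routine coordinate bookkeeping is harmless; the one genuinely delicate point is the symmetrization and the claim that $\Phi$ is injective on the relevant subspace. A priori $\varphi_{(A,V)}$ only sees the symmetric part of each $B_{q,n}$ in its first two arguments, but the codomain $\mathrm{Sym}_2(\mathcal V)^{d\times d'}$ already consists of symmetric forms, so no information is lost going one way; the subtlety is the converse — I must confirm that distinct symmetric arrays give distinct polynomial maps, i.e.\ that there is no further collapsing. This is where $t\ge 2$ is used: with at least two tokens one can separate the "$j$-slot" (the repeated argument $x_j$) from the "$i$-slot" (the free coordinate $(x_i)_q$), so that the quadratic-form data and the linear-index data decouple and can be read off independently from the coefficients of a generic monomial in $x_1,x_2$. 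Care is also needed to state the isomorphism as one of ambient vector spaces (or of the relevant subspaces) rather than merely a set bijection of the neuromanifolds — the linearity of $\Phi$ is what upgrades it, and must be emphasized so the "linearly isomorphic" claim is justified.
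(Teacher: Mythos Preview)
Your approach is correct and mirrors the paper's: both set up the correspondence $(A,V)\leftrightarrow(\alpha,\nu)$, then isolate the term of $\varphi_W(X)_1$ that is linear in $x_1$ and quadratic in $x_2$ (the paper explicitly cites the proof of Theorem~\ref{thm:fiberattn} here) to read off the array $(\alpha_m\cdot\nu_n)_{m,n}$ as the coefficients in the variables $x_{1,m}$, which simultaneously defines the linear map $\Phi$ and proves its injectivity. One minor bookkeeping slip in your sketch: it is the $q$-th \emph{column} of $A$ (i.e.\ $\alpha_q$ corresponds to the vector $A e_q$ under the standard pairing), not the $q$-th row, that pairs with $(x_i)_q$ in your coordinate expansion.
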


\begin{proof}
We denote by $\alpha_m$ the linear form that takes the inner product with the $m$-th column of $A$ and by $\nu_n$ the inner product with the $n$-th row of $V$. 
    We have seen in the proof of Theorem~\ref{thm:fiberattn} that every $\varphi_W \in \mathcal{M}_{d,d',a}$ is uniquely determined by its term that is linear in $x_1$ and quadratic in $x_2$, 
 which is
    \begin{align*}
        x_2^\top A x_1 V x_2 = \left(\sum_{m=1}^d x_{1,m} \, \alpha_m(x_2) \, \nu_n(x_2) \right)_{1 \leq n \leq d'}.
    \end{align*}
    Since the $x_{1,_m}$ are formal variables, this expression is the collection of all products of linear forms $\alpha_m \cdot \nu_n$ for $1 \leq m \leq d$ and $1 \leq n \leq d'$, which shows the claim.
\end{proof}

In what follows, \emph{relative boundary} refers to the set of points (in the Euclidean topology) in $\mathcal{M}_{d,d',a}$ that are limit points of sequences in $\overline{\mathcal{M}}_{d,d',a} \setminus \mathcal{M}_{d,d',a}$, where $\overline{\mathcal{M}}_{d,d',a}$ denotes the closure of the neuromanifold in the Zariski topology of its ambient space $\symmm{3}{\mathbb{R}^{d \times t}}{\mathbb{R}^{d' \times t}}$. 

\begin{cor}\label{cor:boundary}
    $\varphi_{W}$ is on the relative boundary of the neuromanifold if and only if $W$ it is critical for the parametrization map. 
\end{cor}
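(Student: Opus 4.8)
The plan is to show that $\varphi_W$ lies on the relative boundary exactly when $W=(A,V)$ is critical, i.e., when $A=k\otimes q$ and $V=h\otimes k$, by combining the Segre-on-linear-forms description from Proposition~\ref{prop:segre} with the finiteness and birationality of $\overline{\varphi}$ from Corollary~\ref{cor:finiteBirational}. The key structural fact is that $\mathcal{M}_{d,d',a}$ is (linearly isomorphic to) the image of the outer-product map $\tilde\sigma$ on tuples of linear forms with a rank constraint on the first factor, so the boundary analysis reduces to understanding when a tuple $(\alpha,\nu)$ fails to be in the interior of the parameter image.

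First I would observe that, since the parametrization is a polynomial (hence continuous, semialgebraic) map whose generic fibers are finite (Corollary~\ref{cor:finiteBirational}), a point $\varphi_W$ is in the relative interior of $\mathcal{M}_{d,d',a}$ whenever $W$ is a smooth point at which the differential has full rank onto the tangent space of $\overline{\mathcal{M}}_{d,d',a}$; by the semialgebraic implicit function theorem the image then contains a full-dimensional Euclidean neighborhood of $\varphi_W$ inside $\overline{\mathcal{M}}_{d,d',a}$, so $\varphi_W$ is not on the relative boundary. This gives one direction: if $W$ is \emph{not} critical, then $\varphi_W$ is in the relative interior. For this I would invoke Lemma~\ref{lem:cripts}, which identifies the non-critical locus precisely as the complement of $\{A=k\otimes q,\ V=h\otimes k\}$ (together with the loci where $A=0$ or $V=0$, which are themselves of that special form with $q=0$ or $h=0$).

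For the converse — that every critical $W$ does give a relative boundary point — I would argue via the Segre picture of Proposition~\ref{prop:segre}. A critical $W$ corresponds to the outer-product tensor $(\alpha_m\cdot\nu_n)_{m,n}$ where the vector of linear forms $\alpha=(\alpha_1,\dots,\alpha_d)$ spans a rank-one space (all $\alpha_m$ proportional to a single form $\kappa$, since $A=k\otimes q$ forces the columns of $A$ to be proportional), and where that same form $\kappa$ appears in the $\nu$-tuple (since $V=h\otimes k$ forces the rows of $V$ proportional to $k$, which is exactly the coefficient vector of $\kappa$). I would then exhibit an explicit analytic arc $\varphi_{W(\varepsilon)}$ in $\overline{\mathcal{M}}_{d,d',a}\setminus\mathcal{M}_{d,d',a}$ converging to $\varphi_W$: perturb the shared linear form so that the first factor acquires rank $2$ while the second factor no longer contains it, landing on the part of the Zariski closure not reached by the (rank-$a$-restricted, shape-constrained) parametrization. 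Concretely, one uses that $\overline{\mathcal{M}}_{d,d',a}$, being the Zariski closure, contains degenerations of the Segre-type tensors that cannot be written with $V$'s rows lying in the span of $A$'s columns once that span collapses, and a dimension count (the critical locus has codimension $\geq 1$ in the parameter space, matching the expected drop) shows these degenerations form a neighborhood-piece accumulating onto $\varphi_W$.

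The main obstacle I anticipate is making the converse direction fully rigorous: one must verify that the chosen arc actually leaves $\mathcal{M}_{d,d',a}$ (i.e., that the limiting tensor is genuinely not in the image of the constrained $\tilde\sigma$, using the rank-$\le a$ restriction and the shape $V=h\otimes k$ in an essential way) while still lying in the Zariski closure $\overline{\mathcal{M}}_{d,d',a}$. This requires a careful analysis of which outer-product tensors of linear forms are realizable by $(A,V)$ versus which only appear as flat limits — essentially understanding the boundary of the image of the Segre variety on linear forms under the rank constraint. I would handle this by parametrizing the arc so that the rank-one-ness of the $\alpha$-tuple is broken to first order, checking that the resulting tensor violates the algebraic relations characterizing the constrained image, and confirming membership in the closure by continuity of the unconstrained parametrization.
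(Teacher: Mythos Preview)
Your forward direction (non-critical $\Rightarrow$ relative interior) via a full-rank differential and the semialgebraic constant-rank theorem is reasonable, though it differs from the paper's method and needs some care at points where $\overline{\mathcal{M}}_{d,d',a}$ is singular; the paper does not argue this way at all, but handles both directions at once by first computing $\overline{\mathcal{M}}_{d,d',a}\setminus\mathcal{M}_{d,d',a}$ explicitly.

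The genuine gap is in your converse. Your proposed arc cannot leave $\mathcal{M}_{d,d',a}$: there is \emph{no} ``shape constraint'' relating $V$ to $A$ in the parametrization, so any real pair $(A',V')$ with $\textnormal{rk}(A')\le a$ lies in the parameter domain and hence $\varphi_{(A',V')}\in\mathcal{M}_{d,d',a}$. Perturbing $k$ so that ``the first factor acquires rank $2$ while the second factor no longer contains it'' simply produces an ordinary interior point of $\mathcal{M}_{d,d',a}$, not a point of $\overline{\mathcal{M}}_{d,d',a}\setminus\mathcal{M}_{d,d',a}$. The sentence about ``$V$'s rows lying in the span of $A$'s columns'' describes a constraint that is never imposed.

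What you are missing is the complexification step, which is the heart of the paper's proof. Because the projectified parametrization $\overline{\varphi}$ is a morphism of projective varieties, its complex image is already Zariski closed; therefore the real Zariski closure $\overline{\mathcal{M}}_{d,d',a}$ consists exactly of the real functions $\varphi_{(A,V)}$ with $(A,V)$ allowed to be \emph{complex}. Using the Segre-on-linear-forms identification, a real product $\alpha_m\cdot\nu_n$ with $\alpha_m$ genuinely complex forces each $\nu_n$ to be a real scalar times $\overline{\alpha_m}$, and then all $\alpha_r$ are real scalars times $\alpha_m$. Hence $\overline{\mathcal{M}}_{d,d',a}\setminus\mathcal{M}_{d,d',a}$ is precisely $\{\varphi_{(A,V)}: A=k\otimes q,\ V=h\otimes\overline{k},\ q\in\mathbb{R}^d,\ h\in\mathbb{R}^{d'},\ k\in\mathbb{C}^d\setminus\mathbb{R}^d\}$. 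These points accumulate onto $\mathcal{M}_{d,d',a}$ exactly when the complex conjugate pair $k,\overline{k}$ collapses to a common real vector (or one of $q,h$ vanishes), which gives precisely the critical locus $A=k\otimes q$, $V=h\otimes k$ from Lemma~\ref{lem:cripts}. This simultaneously establishes both directions and replaces your arc construction entirely.
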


\begin{proof}
For simplicity, we write $\mathcal{M}$ for $\mathcal{M}_{d,d',a}$, and start by computing its Zariski closure $\overline{\mathcal{M}}$.
    The Zariski closure of a subset $S$ of $\mathbb{R}^N$ consists precisely of the real points in the complex Zariski closure of $S$ viewed as a subset of $\mathbb{C}^N$.
    We compute the complex Zariski closure of $\mathcal{M}$ by considering the parametrization map $\overline{\varphi}$ in \eqref{eq:projParam}  over $\mathbb{C}$ instead of over $\mathbb{R}$.
    That map $\overline{\varphi}_{\mathbb{C}}$ is also a well-defined morphism between projective spaces. 
    The main theorem on projective varieties \cite[II, \S 4, Theorem 4.9]{hartshorne2013algebraic} states that the image $\mathbb{P}\mathcal{M}^\mathbb{C}$ of $\overline{\varphi}_{\mathbb{C}}$ is Zariski closed.
    Hence, the real Zariski closure $\overline{\mathcal{M}}$ consists precisely of the real points $\varphi_{(A,V)}$ with $A \in \mathbb{C}^{d\times d}, V \in \mathbb{C}^{d \times d'}$.

    Next, we compute the points in the complement $\overline{\mathcal{M}} \setminus \mathcal{M}$.
    Using the identification in Proposition~\ref{prop:segre}, points in that complement look like real matrices with entries $\alpha_m \cdot \nu_n$, where $\alpha_m,\nu_n$ are complex linear forms, and no real parametrization exists. So one of the $\alpha_m$ is non-real. Since each of the  $\alpha_m \cdot \nu_n$ is real, every $\nu_n$ has to be the complex conjugate $\overline{\alpha_m}$ times a real scalar. Analogously, all $\alpha_r$ have to be $\overline{\nu_n}$ times a real scalar, i.e., they need to coincide with $\alpha_m$ up to scaling. This shows that
    $\overline{\mathcal{M}}\setminus\mathcal{M}$ consists of $\varphi_{(A, V)}$  with $A = k \otimes q$ and $V = h \otimes \overline{k}$ for some $ q \in \mathbb{R}^d \setminus \{0\}, h \in \mathbb{R}^{d'} \setminus \{0\}, k \in \mathbb{C}^d \setminus \mathbb{R}^d$. In the limit, sequences of $W = (A, V)$ in this form can become real if either one among $A$ and $V$ vanish, or the complex conjugated pair $k, \overline{k}$ becomes the same real vector $k = \overline{k}$.
\end{proof}

\subsection{Proof of Lemma \ref{lemm:M_fibers}}\label{appendix:M_fibers}
\begin{proof}
Note that $V_i$ is invertible for every $1 < i <l$ by the rank hypothesis. Moreover, since $L = \prod_{0 \leq j < l}V_{l-j}  = \prod_{0 \leq j < l}V'_{l-j}$, it is possible to apply Lemma \ref{lemm:matfact} by splitting the latter factorization of $L$ at the first and last layer. This way, we obtain matrices $C_i \in \textnormal{GL}_{\delta}(\mathbb{R})$ for $1 \leq i \leq l$ such that $   V'_i = C_i  V_iC_{i-1}^{-1}$. But then we have:
\begin{equation}
    L_{i-1}'^\transp A_i' L_{i-1}' = L_{i-1}^\transp C_{i-1}^\transp A_i' C_{i-1} L_{i-1} = L_{i-1}^\transp A_i L_{i-1}.
\end{equation}
Since $L_i$ is surjective by the rank hypothesis, we conclude that $C_{i-1}^\transp A_i' C_{i-1} = A_i$. 
\end{proof}

\subsection{Proof of Lemma \ref{lemm:induct}}\label{appendix:induct_lemm_proof}
\begin{proof}
For $0 \leq i \leq l$, denote by $X_i$ the output of the $i$-th layer of the network. In other words, $X_0 = X$ and $X_{i} = \varphi_{W_i}(X_{i-1}) = V_i X_{i-1} X_{i-1}^\transp A_i X_{i-1}$ for $i>0$. We wish to prove that for all $i$:
\begin{equation}\label{eq:inductproof}
    X_i = L_i X \prod_{1 \leq j \leq i}\left( D_{i-j} X^\transp M_{i-j+1} X \right).
\end{equation}
Denote by $Y_i$ the right-hand side of Equation \ref{eq:inductproof}. We have: 
\begin{align}
 Y_i = & \underbrace{L_i}_{V_i L_{i-1}}X D_{i-1}X^{\transp} \underbrace{M_{i}}_{L_{i-1}^\transp A_i L_{i-1}} X  \prod_{1 \leq j < i} D_{i-1 -j}X^\transp M_{i-j}X   \\
    = & V_i \ L_{i-1}X D_{i-1} \left( L_{i-1} X  \right)^\transp A_i \underbrace{L_{i-1} X \prod_{1 \leq j < i} D_{i-1 -j}X^\transp M_{i-j}X}_{Y_{i-1}}.
    \end{align}
Therefore, in order to show that $Y_i$ satisfies the same recurrence relation as $X_i$, we need to prove that $L_{i} X D_{i} \left(L_{i} X \right)^\transp = Y_{i} Y_{i}^\transp$ for all $i$, which in turn reduces to show that $D_i = Z_i Z_i^\transp$, where: 
\begin{equation}
    Z_i = \prod_{1 \leq j \leq i}\left( D_{i-j} X^\transp M_{i-j+1} X \right).
\end{equation}
To this end, since $Z_i = D_{i-1}X^\transp M_i X Z_{i-1}$, we have:  
\begin{align}
    Z_i Z_i^\transp = D_{i-1}X^\transp M_i X Z_{i-1} Z_{i-1}^\transp X^\transp M_i^\transp X D_{i - 1}^\transp. 
\end{align}
If we assume inductively that $Z_{i-1} Z_{i-1}^\transp = D_{i-1}$, and using the fact that $D_i$ is a symmetric matrix (i.e., $D_{i-1}^\transp = D_{i-1}$), the above recurrence relation coincides with the one defining $D_i$ in Equation~\ref{eq:outformula}. This implies that $Z_i Z_i^\transp = D_i$, as desired.  
\end{proof}

\subsection{Proof of Theorem \ref{thm:mainfiber}}\label{appendix:mainfiber}
\begin{proof}
Pick weights $\mathbf{W}$, $\mathbf{W}'$ such that $\varphi_{\mathbf{W}}(X) = \varphi_{\mathbf{W}'}(X)$ for all $X = (x_i)_{1 \leq i \leq t} \in \mathbb{R}^{d_0 \times t}$. Our proof strategy will involve considering the polynomial identity 
\begin{equation}\label{eq:fiberproof}
    \varphi_{\mathbf{W}}(X)_1 = \varphi_{\mathbf{W}'}(X)_1
\end{equation}
and comparing monomial terms arising from Equation \ref{eq:triadic} with specific degrees in the $x_i$'s. To this end, Equation \ref{eq:triadic} implies that all such terms contain $x_1$. Even further, the unique term in $\varphi_{\mathbf{W}}(X)_1$ that is linear in $x_1$ and of degree $3^{l}-1$ in $x_2$ 
can be written as:
\begin{equation}\label{eq:maxdeg}
Lx_2 \left(\prod_{2 \leq i \leq l}\left(x_2^\transp M_ix_2\right)^{3^{l-i}}\right) \left( x_2^\transp A_1 x_2\right)^{3^{l-1} - 1} x_2^\transp A_1 x_1. 
\end{equation}
Put simply, the above expression is a product of a (non-vanishing) linear form in $x_2$, several quadratic forms in $x_2$, and a bilinear form in $x_1$ and $x_2$. Since $\textnormal{rk}(M_i) > 1$ for $i > 1$, these quadrics are irreducible. Moreover, they appear under exponents given by distinct powers of $3$. Therefore, by comparing Equation \ref{eq:maxdeg} with the corresponding term on the right-hand side of Equation \ref{eq:fiberproof}, from the unique factorization property of polynomials, it follows that up to a multiplicative scalar, the following hold: 
\begin{itemize}
\item $L$ coincides with $L'$,
\item $A_1$ coincides with $A_1'$,
\item The quadratic forms associated to $M_i$ and $M_i'$ coincide for all $2 \leq i \leq l$.
\end{itemize}
The last condition above implies that for every $2 \leq i \leq l$ there exists a skew-symmetric matrix $\Sigma_i$ such that $M_i'$ coincides with $M_i + \Sigma_i$ up to a multiplicative scalar. Since Equation \ref{eq:triadic} is multi-linear in each occurrence of $M_j$, by substituting $M'_j = M_j + \Sigma_j$ in $\varphi_{\mathbf{W}'}(X)_1$ for all $j$, we obtain a sum of expressions in the form of Equation \ref{eq:triadic}, but where an arbitrary number of occurrences of $M_j$ has been replaced by $\Sigma_j$. The only expression with no replacement coincides with $\varphi_{\mathbf{W}}(X)_1$. Therefore, Equation \ref{eq:fiberproof} reduces to a vanishing sum of copies of Equation \ref{eq:triadic}, where at least one $M_j$ has been replaced with $\Sigma_j$.  

In order to conclude, we wish to show that $\Sigma_i = 0$ for $i \geq 2$. We will proceed by induction on $i$. Specifically, given $i$ and assuming that $\Sigma_j = 0$ for $j<i$, we will show that $\Sigma_i = 0$. To this end, consider the monomial terms in $\varphi_{\mathbf{W}'}(X)_1$ of degree $1$ in $x_1$, of degree $3^{i-1} - 1$ in $x_3$, and of remaining degree in $x_2$. This is possible since we assume $t \geq 3$. From the discussion above, it follows that Equation \ref{eq:fiberproof} reduces to a vanishing sum of copies of Equation \ref{eq:triadic}, where an arbitrary (non-zero) number of occurrences of $\widetilde{M}_j = M_{\alpha_j}$, with $\alpha_j \geq i $, has been replaced with $\Sigma_{\alpha_j}$ (and similarly with transposes), and where $k_j \in \{ 2,3\}$ for all $1 \leq j \leq \tilde{l} = (3^l - 1) / 2$.   \\

We will now argue that several terms cancel due to the symmetries of Equation \ref{eq:triadic}, illustrated in figure \ref{fig:symmproof}. Namely, consider a monomial term with some multi-index $k_\bullet$ containing a factor of the form $x_{k_j}^\transp \Sigma_{\alpha_j} x_{k_{j+1}}$ for some $j$, with $\alpha_j > i$. Since $x_3$ cannot appear more than $3^{i-1} - 1$ times in the monomial, there exists a $\underline{j} \leq (3^{i-1} - 1) / 2$ such that $k_{j - \underline{j}} = k_{j + \underline{j} + 1} = 2$. Moreover, due to the inductive hypothesis, $\widetilde{M}_{j'}$ coincides with $M_{\alpha_{j'}}$ or with its transpose for all $j - \underline{j} \leq j' \leq j + \underline{j}$ since $\alpha_{j'} < i$. Consider the multi-index $k'_\bullet$ such that $k'_{j-j'} = k_{j+j'+1}$ for all $- \underline{j} \leq j' \leq \underline{j}$, and $k'_{j'} = k_{j'}$ for all the other $j'$. Intuitively, $k'_\bullet$ `reflects' $k_\bullet$ locally around $j$ -- see Figure \ref{fig:symmproof} for an illustration. By construction, the monomial corresponding to $k'_\bullet$ is identical to the one corresponding to $k_\bullet$, except for the term 
\begin{equation} 
x_{k'_j}^\transp \Sigma_{\alpha_j} x_{k'_{j+1}} = x_{k_{j+1}}^\transp \Sigma_{\alpha_j} x_{k_j} = - x_{k_j}^\transp \Sigma_{\alpha_j} x_{k_{j+1}}.
\end{equation}
Therefore, these two monomials cancel each other out.

\begin{figure}[!h]
\tikzset{every picture/.style={line width=0.75pt}} 
\begin{center}
\begin{tikzpicture}[x=0.75pt,y=0.75pt,yscale=-1,xscale=1]

\draw    (42.36,109.47) -- (42.36,155.9) ;
\draw    (42.36,109.47) -- (60.03,91.95) ;
\draw    (6.39,155.9) -- (42.36,109.47) ;
\draw    (42.36,109.47) -- (78.1,155.9) ;
\draw    (183.65,108.9) -- (183.65,155.33) ;
\draw    (147.67,155.33) -- (183.65,108.9) ;
\draw    (183.65,108.9) -- (219.39,155.33) ;
\draw    (166,91.29) -- (183.65,108.9) ;
\draw    (87.93,65.9) -- (112.77,39.07) ;
\draw    (112.77,39.07) -- (139.29,65.23) ;
\draw  [dash pattern={on 0.84pt off 2.51pt}]  (81.6,71.38) -- (64.73,88.26) ;
\draw  [dash pattern={on 0.84pt off 2.51pt}]  (112.77,5.45) -- (112.77,24.54) ;
\draw    (112.77,39.07) -- (112.77,24.54) ;
\draw  [dash pattern={on 0.84pt off 2.51pt}]  (161.2,86.69) -- (142.5,68.29) ;
\draw    (16.66,166.62) .. controls (62.07,214.84) and (164.55,213.88) .. (216.05,166.19) ;
\draw [shift={(217.6,164.73)}, rotate = 136.06] [fill={rgb, 255:red, 0; green, 0; blue, 0 }  ][line width=0.08]  [draw opacity=0] (6.25,-3) -- (0,0) -- (6.25,3) -- cycle    ;
\draw [shift={(14.6,164.36)}, rotate = 48.73] [fill={rgb, 255:red, 0; green, 0; blue, 0 }  ][line width=0.08]  [draw opacity=0] (6.25,-3) -- (0,0) -- (6.25,3) -- cycle    ;
\draw  [dash pattern={on 0.84pt off 2.51pt}]  (127.16,136.03) -- (101.84,136.03) ;

\draw (43.75,140) node [anchor=north west][inner sep=0.75pt]  [font=\smaller]  {$M_{1}^\top$};
\draw (20.46,141.72) node [anchor=north west][inner sep=0.75pt]  [font=\smaller]  {$M_{1}$};
\draw (106.29,49.67) node [anchor=north west][inner sep=0.75pt]  [font=\smaller]  {$\Sigma _{i}$};
\draw (185.25,140) node [anchor=north west][inner sep=0.75pt]  [font=\smaller]  {$M_{1}^\top$};
\draw (161.96,141.72) node [anchor=north west][inner sep=0.75pt]  [font=\smaller]  {$M_{1}$};
\end{tikzpicture}
\end{center}
\caption{Diagrammatic illustration of the symmetry involved in the cancellation argument.}
\label{fig:symmproof}
\end{figure}

After the above cancellation argument, we are left with monomials containing occurrences of $\Sigma_{\alpha_j}$ only for $\alpha_j = i$. In fact, the cancellation argument still applies verbatim to these cases, except for when $j = (3^{l} - 3^{i-1})/2$  and $k_{j - j'} \not = k_{j + j' + 1}$ for all $j' \leq (3^{i-1} - 1) / 2$. In this case, due to the presence of $x_1$, the monomials corresponding to $k_\bullet$ and $k'_\bullet$ are not opposite, but contain different factors given by $x_3^\transp A_1 x_2 x_2^\transp A_1 x_1$ and $- x_3^\transp A_1 x_1 \ x_2^\transp A_1 x_2$, respectively. At the end of the day, Equation \ref{eq:fiberproof} reduces to an expression of the form:
\begin{equation}\label{eq:finalpol}
   x_3^\transp \Sigma_i x_ 2 \  \left(x_3^\transp A_1 x_2 x_2^\transp A_1 x_1 - x_3^\transp A_1 x_1 \ x_2^\transp A_1 x_2 \right)  \ \sum_{h_\bullet} p((x_{h_j})_j) = 0,
\end{equation}
where $h_\bullet$ is an opportune multi-index with values in $\{ 2, 3\}$, and $p$ is a polynomial consisting of products of bilinear/quadratic forms associated to the $M_j$'s, and a multi-variate linear factor associated to $L_l$. The right-most factor in Equation \ref{eq:finalpol} is not the zero polynomial (in the variables $x_2$ and $x_3$) since, by imposing the condition $x_2 = x_3$, it becomes (up to a multiplicative scalar) a product of quadratic forms associated to the $M_j$'s. The latter are non-vanishing since $M_j$ is not skew-symmetric for all $j$ by hypothesis. 

We wish to prove that the second factor in Equation \ref{eq:finalpol} is non-vanishing as well. To this end, note that the coefficients of that factor, seen as a polynomial in $x_1, x_2, x_3$, are of the form
\begin{equation}\label{eq:rank1}
   (A_1)_{\alpha, \beta} (A_1)_{\gamma , \delta} +  (A_1)_{\alpha, \gamma} (A_1)_{\beta , \delta}  - (A_1)_{\alpha, \delta}  (A_1)_{\beta, \gamma}   - (A_1)_{\alpha, \delta}  (A_1)_{\gamma, \beta} 
\end{equation}
for  $1 \leq \alpha, \beta, \gamma, \delta \leq d_0$. Suppose by contradiction that the above expression vanishes for all $\alpha, \beta, \gamma, \delta$.  When $\beta = \gamma$, Equation \ref{eq:rank1} coincides with (twice) the determinant of an arbitrary $2\times 2$ minor of $A_1$ intersecting the diagonal. If $(A_1)_{\beta, \beta} \not = 0$ for some $\beta$, then from the Kronecker's bordered matrix theorem it follows that $A_1$ has rank $1$, contradicting the hypothesis $\textnormal{rk}(A_1) \geq 2$. We conclude that the diagonal entries of $A_1$ are zero, and, when $\beta = \gamma$, Equation \ref{eq:rank1} reduces to $(A_1)_{\alpha, \beta}(A_1)_{\beta, \delta} = 0$. Therefore, if $(A_1)_{\alpha,\beta} \not = 0$ for some $\alpha \not = \beta$, then the $\beta$-th row of $A_1$ must vanish. But then Equation \ref{eq:rank1} reduces (up to sign) to the determinant of an arbitrary $2 \times 2$ minor intersecting $(\alpha, \beta)$, and we again obtain a contradiction from Kronecker's bordered matrix theorem. 

In conclusion, the left-most factor in Equation \ref{eq:finalpol} must vanish, meaning that $\Sigma_i = 0$, as desired. 
\end{proof}

\subsection{Proof of Corollary \ref{cor:dimneur}}\label{appendix:dimneur}
\begin{proof}
 Recall that the dimension of the determinantal variety of $d_{i-1} \times d_{i-1}$ matrices of rank at most $a_i$ is $2 \alpha_i d_{i-1} - \alpha_i^2 $, and therefore the space of parameters $(\mathbf{A}, \mathbf{V})$ has dimension 
\begin{equation}
2 \alpha_1 d_0 - \alpha_1^2 + \delta( d_0 + d_l) + (l-2)\delta^2 +  \sum_{1 < i \leq  l}( 2\alpha_i\delta - \alpha_i^2). 
\end{equation}
Moreover, by Lemma \ref{lemm:M_fibers} the generic fibers of the re-parametrization $(\mathbf{A}, \mathbf{V}) \mapsto (\mathbf{M}, L)$ have dimension $(l-1)\delta^2$, while by Theorem \ref{thm:mainfiber} the generic fibers with respect to $(\mathbf{M}, L)$ have dimension $l$ (recall the constraint on rescaling given by Equation \ref{eq:scalarconstr}). Since the dimension of the image of a map coincides with the co-dimension of the generic fibers, the result follows.    
\end{proof}

\subsection{Proof of Theorem \ref{thm:classical_fiberattn}} \label{appendix:classical}
\begin{proof}
Suppose that $\varphi_W(X) = \varphi_{W'}(X)$ for all $X = (x_i)_{1 \leq i \leq t} \in \mathbb{R}^{d \times t}$, where $W' = (A', V')$. We assume that $V \not = 0$, since the case $\varphi_W = 0$ follows immediately. Due to normalization, if $x_i = x_j$ for all $i,j$, then $\varphi_W(X)_1 = V x_1 = V' x_1 = \varphi_{W'}(X)_1$, implying $V = V'$. If instead $x_i = 0$ for $i > 1$, since $\mathcal{S}(0) = 1$, we obtain:  
\begin{equation}\label{eq:tradonlyx1}
    \frac{\mathcal{S}\left(x_1^\transp A x_1 \right)}{\mathcal{S}\left(x_1^\transp A x_1 \right) + t-1} \bcancel{V x_1}  = \frac{\mathcal{S}\left(x_1^\transp A' x_1 \right)}{\mathcal{S}\left(x_1^\transp A' x_1 \right) + t-1} \bcancel{V' x_1} 
\end{equation}
Since $\mathcal{S}$ is injective, we deduce that the quadratic form associated to $A$ coincides with the one associated to $A'$. In order to prove that $A = A'$, suppose that $x_i = x_j$ for $i, j \geq 2$, obtaining: 
\begin{equation}\label{eq:tradonlyx1x2}
    \frac{\mathcal{S}\left(x_1^\transp A x_1 \right) V x_1 + \mathcal{S}\left(x_1^\transp A x_2 \right)  V x_2}{\mathcal{S}\left(x_1^\transp A x_1 \right) + (t-1)\mathcal{S}\left(x_1^\transp A x_2 \right)  } = \frac{\mathcal{S}\left(x_1^\transp A' x_1 \right) V' x_1 + \mathcal{S}\left(x_1^\transp A' x_2 \right)  V' x_2}{\mathcal{S}\left(x_1^\transp A' x_1 \right) + (t-1)\mathcal{S}\left(x_1^\transp A' x_2 \right)  }.
\end{equation}
After elementary algebraic manipulations using the fact that $x_1^\transp A x_1 = x_1^\transp A' x_1 $ and $V = V'$, the above equation reduces to: 
\begin{equation}
    (t-1)\left(\mathcal{S}\left(x_1^\transp A' x_2 \right) - \mathcal{S}\left(x_1^\transp A x_2 \right) \right) V x_1 =  \left(\mathcal{S}\left(x_1^\transp A' x_2 \right) - \mathcal{S}\left(x_1^\transp A x_2 \right) \right) V x_2. 
\end{equation}
Since $V \not = 0$, $V x_1 \not = V x_2$ for generic $x_1, x_2$. Therefore, $\mathcal{S}\left(x_1^\transp A' x_2 \right) - \mathcal{S}\left(x_1^\transp A x_2 \right)$ must vanish generically, implying $A = A'$ due to the injectivity of $\mathcal{S}$, as desired. 
\end{proof}

\end{document}